\definecolor{lightblue}{HTML}{CCEEFF}
\theoremstyle{definition}
\newtheorem{assumption}{Assumption}
\newtheorem{theorem}{Theorem}
\newtheorem{definition}{Definition}
\theoremstyle{remark}
\newtheorem*{proof*}{Proof}
\def\eqref#1{equation~\ref{#1}}
\def\1{\bm{1}}
\def\rt{{\textnormal{t}}}
\DeclareMathAlphabet{\mathsfit}{\encodingdefault}{\sfdefault}{m}{sl}
\SetMathAlphabet{\mathsfit}{bold}{\encodingdefault}{\sfdefault}{bx}{n}
\def\sQ{{\mathbb{Q}}}
\newcommand{\E}{\mathbb{E}}
\title{The Accuracy Cost of Weakness: A Theoretical Analysis of Fixed-Segment Weak Labeling for Events in Time}
\author{\name John Martinsson \email john.martinsson@ri.se \\
      \addr Computer Science \\
      RISE Research Institutes of Sweden \\
      \addr Centre for Mathematical Sciences \\
      Lund University \\
      \AND
      \name Tuomas Virtanen \email tuomas.virtanen@tuni.fi\\
      \addr Signal Processing Research Centre \\
      Tampere University 
      \AND
      \name Maria Sandsten \email maria.sandsten@matstat.lu.se\\
      \addr Centre for Mathematical Sciences \\
      Lund University \\
      \AND
      \name Olof Mogren \email olof.mogren@ri.se \\
      \addr RISE Research Institutes of Sweden \\
      \addr Swedish Centre for Impacts of Climate Extremes (climes) \\
      \addr Climate AI Nordics
      }
\begin{document}

\maketitle

\begin{abstract}
Accurate labels are critical for deriving robust machine learning models. Labels are used to train supervised learning models and to evaluate most machine learning paradigms. In this paper, we model the accuracy and cost of a common weak labeling process where annotators assign presence or absence labels to fixed-length data segments for a given event class. The annotator labels a segment as "present" if it sufficiently covers an event from that class, e.g., a birdsong sound event in audio data. We analyze how the segment length affects the label accuracy and the required number of annotations, and compare this fixed-length labeling approach with an oracle method that uses the true event activations to construct the segments. Furthermore, we quantify the gap between these methods and verify that in most realistic scenarios the oracle method is better than the fixed-length labeling method in both accuracy and cost. Our findings provide a theoretical justification for adaptive weak labeling strategies that mimic the oracle process, and a foundation for optimizing weak labeling processes in sequence labeling tasks.
\end{abstract}

\section{Introduction}
In supervised machine learning, labeled datasets are required for training and evaluation. During evaluation, the accuracy of the labels determine the quality of the analysis. However, in practice, labels often contain noise that varies with the input sample and label type. Noisy training labels present a persistent challenge in machine learning~\citep{Liang2009, Song2022}. Deep learning models, in particular, are prone to overfitting noisy labels, raising questions about the nature of generalization~\citep{Zhang2021}. Regularization techniques such as dropout~\citep{Srivastava2014}, data augmentation~\citep{Shorten2019}, and weight decay~\citep{Krogh1991} mitigate overfitting but fail to eliminate the performance gap between training on noisy versus clean labels~\citep{Song2022}. 

Beyond the well-documented challenges posed by noisy training labels, inaccurate evaluation labels present a significant, yet often overlooked, obstacle to reliable machine learning. When evaluation metrics are computed against noisy ground truth, the apparent "best" performing model might simply be the one that most closely reproduces the noise present in the evaluation set, rather than exhibiting superior generalization capabilities. This very issue, where noisy evaluation labels can lead to the rejection of models that have learned the true clean label distribution, is a central concern addressed by \citet{Görnitz2014}. This can lead to the selection of suboptimal models that perform well on the flawed evaluation data but generalize poorly to unseen, cleaner data or data from real-world applications. Consequently, performance benchmarks can be inflated and misleading, hindering meaningful comparisons between different approaches. Therefore, understanding the characteristics of label noise, not just in the training data but also in the evaluation data, is crucial for developing and selecting models that are truly effective and robust.

Labels are typically obtained through human annotation, a process that involves significant time and financial investment, particularly for complex data like audio or time-series signals. In this work, we consider a form of weak labeling where the annotator assigns presence or absence labels to predefined data segments. This offers a practical and cost-effective approach for annotating large audio datasets~\citep{Martin-Morato2023a}. To reduce cost, weak labels avoid specifying precise boundaries within the data segments, focusing instead on general presence or absence of the target class. However, this simplification introduces noise into the labels, especially for data with time-varying characteristics, such as audio signals, where events can occur intermittently within the labeled segment~\citep{Turpault2021}. Understanding and mitigating this noise is critical to effectively leverage weak labels in downstream applications~\citep{Kumar2016}.

The noise in weak labels can be categorized into two types: class label noise (mislabeling event presence or absence in a segment) and segment label noise (mislabeling due to misaligned segment boundaries). While class label noise has been extensively studied~\citep{Song2022, Zhang2021}, the effects of segment label noise remain underexplored. This type of noise significantly affects tasks such as sound event detection~\citep{Hershey2021, Turpault2021, Shah2018} and medical image segmentation~\citep{Yao2023}. Strategies like pseudo-labeling~\citep{Dinkel2022}, robust loss functions~\citep{Fonseca2019_agnostic}, and adaptive pooling operators~\citep{McFee2018} aim to address challenges when training on weak labels. However, fully understanding the impact of weak labels requires quantifying their accuracy~\citep{Shah2018, Turpault2021}.

While adaptive annotation methods have been explored in some domains, fixed-segment (FIX) annotation remains the de facto standard for large-scale sound event datasets due to its simplicity and scalability. Foundational datasets such as AudioSet \citep{gemmeke2017audioset},
CHIME \citep{foster2015chime}, OpenMIC-2018 \citep{humphrey2018openmic}, YBSS-200 \citep{singh2019scene}, and SONYC \citep{bello2019sonyc},
as well as more recent datasets such as VGG Sound \citep{chen2020vggsound}, MATS \citep{irene_martin_morato_2021_4774960}, and MAESTRO Real \citep{morato2023maestro} continue to rely on fixed segments.
A practical motivation is that for many sound sources, precise boundaries are inherently ambiguous: a passing car or fading background noise lacks sharp onsets and offsets, and audio scenes frequently contain overlapping sources.
These challenges make FIX annotation a practical default even when adaptive approaches are conceptually appealing.

In this paper we use the term weak labeling exclusively to refer to temporal ambiguity introduced by segment-level human annotation. We do not use the term in the sense of pseudo-labels or automatically generated annotations.

Current methods typically estimate label noise rates \textit{after} collecting labels~\citep{Song2022}, employing techniques like noise transition matrices~\citep{Li2021} or cross-validation~\citep{Chen2019}. In contrast, predicting label noise rates \textit{before} data collection remains largely unexplored. This is particularly challenging when the noise stems from human annotators, as it is difficult to formalize. In cases involving partially automated processes, however, the noise introduced by the automated component can often be modeled under specific assumptions.

In this work, we model the automated component of a commonly used weak labeling method for segmentation tasks: fixed-length weak labeling (FIX). We quantify the segment label noise of this process, and study the expected label accuracy. This method, commonly employed in sound event detection, involves annotators providing presence or absence labels for fixed-length segments of the data (automated component), rather than specifying precise event boundaries. By simplifying the labeling process, FIX weak labeling reduces annotation effort but introduces segment label noise when segments misalign with the actual onsets and offsets of events. To benchmark this approach, we compare it to an oracle weak labeling method, ORC weak labeling, which assigns presence or absence labels to segments derived using the true onsets and offsets of the events.


\begin{figure}
    \centering
    \includegraphics[width=0.8\linewidth]{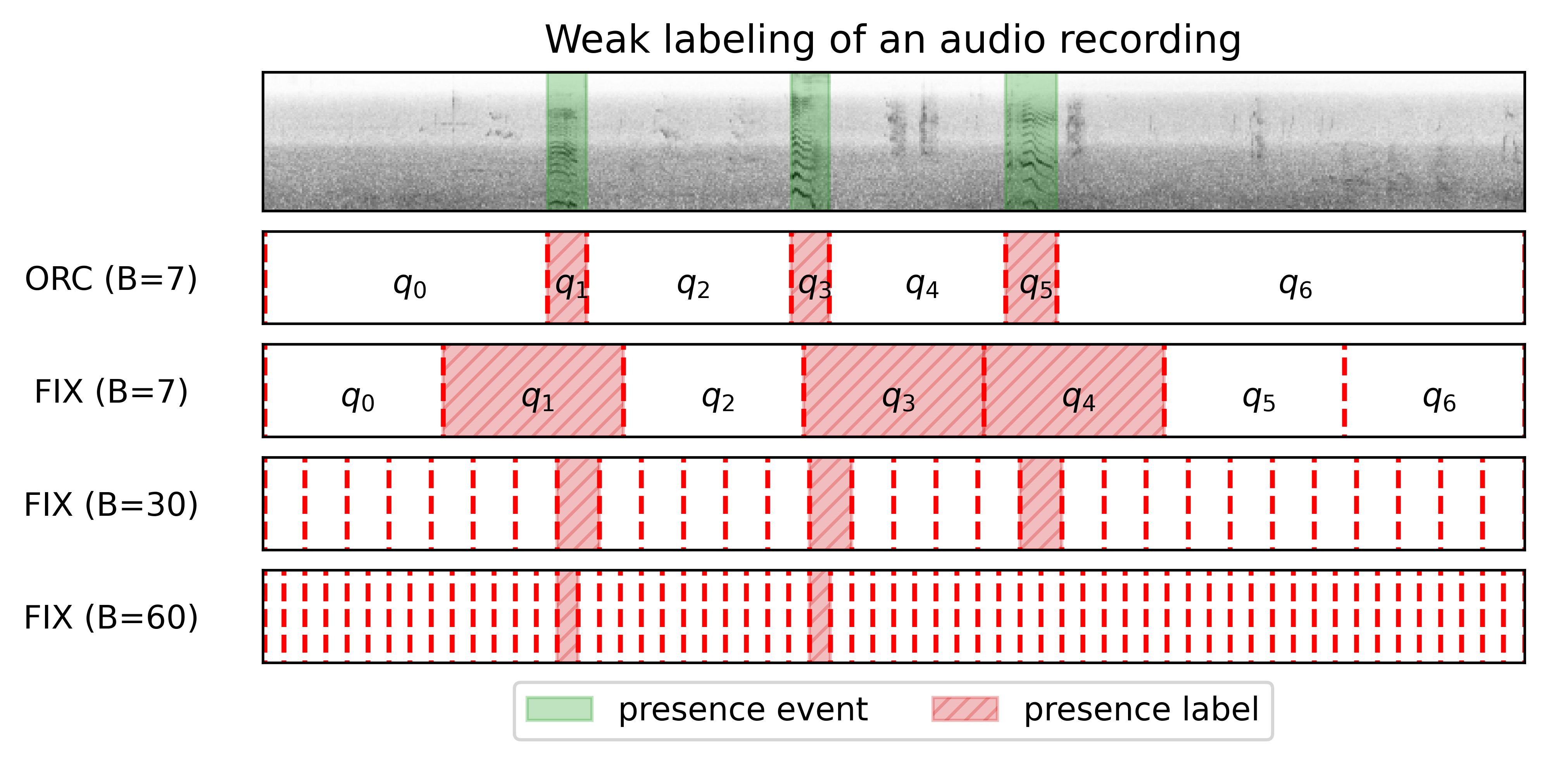}
    \caption{Resulting presence (red) and absence (white) labels from ORC and FIX weak labeling for an audio recording with three presence events (green). ORC weak labeling assigns labels to ground truth segments, achieving perfect label accuracy with $B=7$ labels. FIX weak labeling, shown for different segment lengths ($B=7$, $B=30$, $B=60$), introduces segment label noise as segments misalign with events. Longer segments reduce annotation cost but increase noise, while shorter segments align better but require more annotations. Note that too short segments ($B=60$) may lead to the annotator missing the presence of the event because it does not cover a large enough fraction of it.}
    \label{fig:weak_labeling}
\end{figure}

Figure~\ref{fig:weak_labeling} illustrates the trade-offs between annotation cost and label accuracy for the ORC and FIX weak labeling methods. ORC weak labeling achieves perfect label accuracy by aligning the segments with the ground truth presence events (green) using a minimal number of annotated segments. In contrast, FIX weak labeling shows varying accuracy depending on segment length: shorter segments improve alignment ($B=30$) but require more annotations, while longer segments ($B=7$) reduce cost at the expense of accuracy. In addition, too short segments ($B=60$) can lead to the annotator missing event presence. These trade-offs are central to understanding how FIX weak labeling can be used effectively. By analyzing the FIX weak labeling method, we provide a theoretical framework to guide data collection efforts. 


Our analysis shows that FIX weak labeling systematically introduces segment noise, and we provide closed-form expressions for expected label accuracy and the optimal segment length. These results establish a quantitative baseline for annotation design, highlight the limits of non-adaptive strategies, and motivate the development of adaptive methods that approximate oracle labeling.

In summary, our contributions include:
\begin{itemize}
\item Closed-form expressions for label accuracy and annotation cost in FIX and ORC weak labeling, made tractable by assuming an annotator model and a simplified data distribution.
\item A simulation study demonstrating that our theoretical framework generalizes to more complex data distributions and serves as an upper bound for the accuracy of FIX weak labeling.
\item A theoretical foundation for developing adaptive weak labeling methods that better approximate ORC weak labeling, such as~\citep{Martinsson2024} for sound event detection and~\citep{Kim2023} for image segmentation.
\end{itemize}

Our analysis focuses on one-dimensional data, and the assumptions are justified by common characteristics in bioacoustic sound events. These time-localized, non-stationary animal vocalizations often require annotators to hear significant portions of the sound to assign accurate presence labels. Note, however, that while our framework is tailored to this domain, the principles extend to annotation of events in time in other data that shares these characteristics.
\section{Problem Setting}


The analysis is framed within a multi-pass binary labeling setting. Here, an annotator assigns binary labels (presence or absence) to data segments based on the occurrence of specific sound events. The annotator model abstracts how an annotator interacts with data by labeling segments, without requiring precise knowledge of event boundaries. While inspired by time-localized and non-stationary sound events, this framework is generalizable to any time series with similar characteristics.

It's important to emphasize that, in this weak labeling setting, the concept of overlapping events is not explicitly modeled. Overlapping events from the same class are treated as a single, longer presence event, because presence/absence labels cannot differentiate between individual event instances. For instance, in an audio recording with two birds calling simultaneously, this weak labeling framework simplifies the overlap into a single 'present' event. While this simplification is necessary when studying weak labeling in this setting, it fundamentally restricts our ability to resolve polyphony (the identification of multiple overlapping sound events). We leave the exploration of annotator models capable of providing richer labels to future work; this is beyond the scope of the current study.

For events of different classes occurring simultaneously, annotation is typically carried out in a multi-pass setup: each class is annotated independently, often by different annotators. Overlaps across classes are therefore not mutually exclusive—both events can be marked present within the same temporal window. The results from this paper are still valid when viewed for each class separately.


\subsection{The Assumed Data Distribution}
\label{sec:label_distribution}

A sound event $e$ is defined by its start time \( a_e \in \mathbb{R} \), end time \( b_e \in \mathbb{R} \), and class \( c_e \in \mathcal{C} \), denoted as \( e = (a_e, b_e, c_e) \). Audio recordings are assumed to have finite length \( T \), and we assume that the events are uniformly distributed locally in time (see Section~\ref{sec:expected_label_accuracy_given_overlap} and Appendix~\ref{app:uniform_assumption} for more details).



\subsection{The Assumed Annotator Model}
\label{sec:annotator_model}

For a given sound event class \( c \in \mathcal{C} \), the annotator decides the presence or absence of an event $e$ of class \( c \) in a data segment \( q = (a_q, b_q) \), where \( d_q = b_q - a_q \) is the fixed-length of the segment. We will refer to $q$ as a query segment because it is queried for a presence or absence label. Let $l_q \in \{0, 1\}$ denote the weak label indicated by the annotator for query segment $q$, where $l_q = 1$ indicates presence of an event of class $c$ in $q$ and $l_q = 0$ indicates absence of that event class in $q$. Detecting the presence of an event requires observing a sufficient fraction of the event within the query segment, formalized as follows:

\begin{definition}
\label{def:event_fraction}
The \textit{event fraction} is the fraction of the total event duration \( d_e = b_e - a_e \) that overlaps with the query segment \( q \),
\begin{equation}
\label{eq:event_fraction}
    h(e, q) = \frac{|e \cap q|}{d_e},
\end{equation}
where \( e \cap q \) is the intersection of \( (a_e, b_e) \) and \( (a_q, b_q) \).
\end{definition}

\begin{definition}
\label{def:annotator_criterion}
The \textit{presence criterion} \( \gamma \in (0, 1] \) is the minimum event fraction required for the annotator to detect the presence of \( e \) in \( q \),
\begin{equation}
\label{eq:annotator_criterion}
    h(e, q) \geq \gamma.
\end{equation}
\end{definition}

The annotator assigns a presence label (\(l_q = 1\)) to \( q \) if there is sufficient overlap with any presence event $e$ of class $c$ (\( h(e, q) \geq \gamma \)); otherwise, it assigns an absence label (\(l_q=0\)). 

The parameter $\gamma$ reflects the annotator’s sensitivity: lower $\gamma$ values indicate sensitivity to smaller event fractions, while higher values require larger fractions. This model of perceptual ability is particularly suited for non-stationary events (e.g., a specific birdsong) where recognizing a relative portion of the event's structure is key, as opposed to stationary sounds (e.g., an engine hum) which might be identified after a fixed absolute duration.

This framework captures variability in annotator behavior. For example, detecting "human speech" or "bird song" may only require hearing a small fraction of the event (\( \gamma \) closer to 0), while recognizing specific phrases or bird species might demand a near-complete observation (\( \gamma \) closer to 1). The value of \( \gamma \) thus depends on the annotator and the complexity of the event class. This model provides a flexible yet precise way to simulate annotator behavior and quantify their labeling performance. However, it is important to note that this model is deterministic, focusing on temporal alignment between events and the query segment. In practice, human annotation often involves stochastic factors, such as variability in perception and judgment, which are not explicitly modeled here.

\subsection{Label Accuracy}
\label{sec:quality_of_presence_labels}

Label accuracy measures the alignment between annotator-provided labels and ground truth labels:

\begin{definition}
The label accuracy is defined as 
\begin{equation}
\label{eq:query_iou}
    F(e, q, \gamma) = \begin{cases}
        \frac{|e \cap q|}{d_q}, & \text{ if } l_q = 1, \\
        \frac{d_q - |e \cap q|}{d_q}, & \text{ if } l_q = 0.
    \end{cases}
\end{equation}
\end{definition}

For instance, consider a 3-second query segment ($d_q = 3$) that overlaps exactly one second ($|e \cap q|=1$) with a 2-second sound event ($d_e = 2$) of the class bird song  $(c=\text{``bird song''}$). The annotator assigns a presence label ($l_q = 1$) with label accuracy \( \frac{|e \cap q|}{d_q} = \frac{1}{3} \) if half or less of the event needs to be in the query segment ($\gamma \leq 0.5$). Contrary, the annotator assigns an absence label (\(l_q = 0\)) with label accuracy \( \frac{d_q - |e \cap q|}{d_q} = \frac{3 - 1}{3} = \frac{2}{3} \) if more than half of the event ($\gamma > 0.5$) needs to be in the query segment. This formulation isolates the segment label noise (\( 1 - F(e, q, \gamma) \)) introduced by the automated component (fixed-length segments) of the FIX weak labeling method.


\section{The Label Accuracy and Cost of ORC Weak Labeling}
Let us start with the ORC weak labeling method. This method uses a priori information about the event start and end times and is therefore not available in practice, but should be seen as an upper bound on what can be achieved with weak labeling. The start and end times of the true presence and absence events are used to construct the query segments:
\begin{equation}
\label{eq:orc}
    \sQ_{\text{ORC}} = \{(a_0, b_0), (a_1, b_1), \dots, (a_{B_{\text{ORC}}-1}, b_{B_{\text{ORC}}-1})\} = \{q_0, \dots, q_{B_{\text{ORC}}-1}\},
\end{equation}
where $(a_i, b_i)$ is the $i$th ground truth presence or absence event. The annotator indicates presence or absence for each of these segments, which by construction results in the ground truth annotations, illustrated in Figure~\ref{fig:orc_weak_labeling}. In the example, there are three target events (green), and four absence events, which means that $B_{\text{ORC}}=7$. In general $B_{\text{ORC}}\in\{2M-1, 2M+1\}$, where $M$ denotes the number of presence events. The number of absence events can be fewer than $2M+1$ if the recording starts or ends with a presence event, however, for simplicity and without losing generality, we will consider $B_{\text{ORC}}=2M+1$ as the minimum number of query segments needed for ORC to derive the ground truth. From an annotation cost perspective, this is the most cautious choice, and it is also the most likely outcome. The query accuracy is $1$ for each query segment since by construction the fraction of correctly labeled data in each query segment will be $1$ when given the correct presence or absence labels.

\begin{figure}[H]
    \centering
    \includegraphics[width=0.8\linewidth]{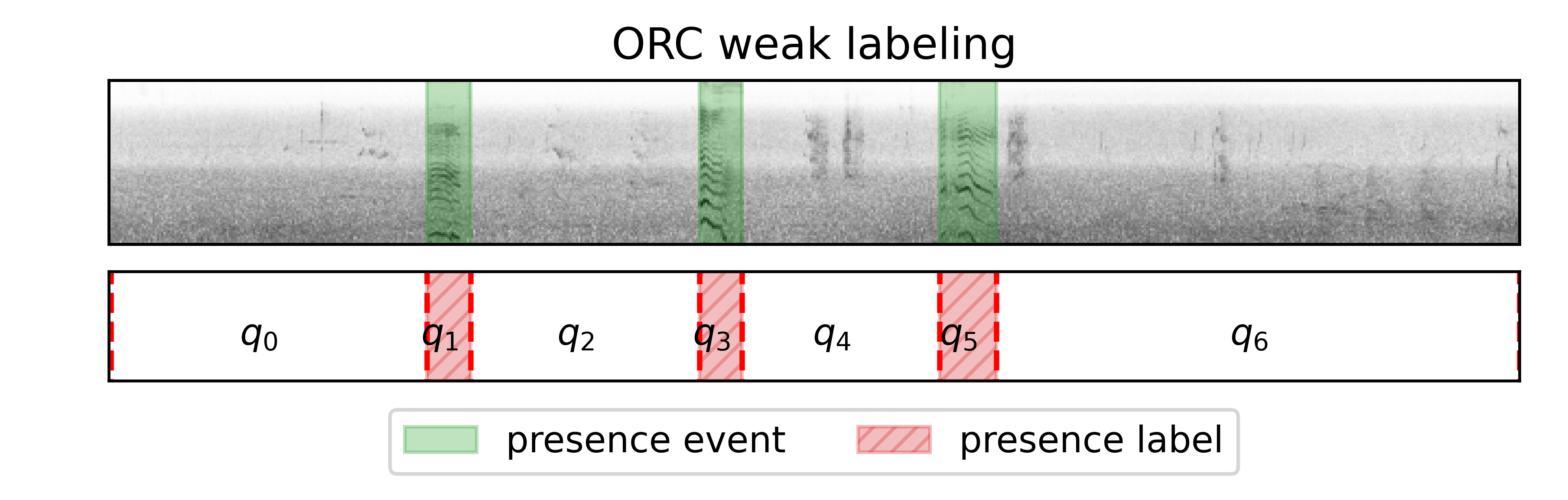}
    \caption{ORC weak labeling of an audio recording with three target events ($M=3$) shown in green and four absence events. The $B_{\text{ORC}}=7$, query segments $q_0, \dots, q_6$ are derived from the ground truth segmentation of the data, and therefore the label accuracy will by definition be $1$.} 
    \label{fig:orc_weak_labeling}
\end{figure}

In summary, the ORC weak labeling method produces annotations with label accuracy $1$, using the minimum number of query segments needed to achieve this. We use this as a reference on what can be achieved for weak labeling data.




\section{The Label Accuracy and Cost of FIX Weak Labeling}

The outline of this section is as follows. In Section~\ref{sec:fix_weak_labeling_method} we define the FIX labeling method. In Section~\ref{sec:expected_label_accuracy_given_overlap} we derive a closed-form expression for the expected label accuracy of a query segment given that it overlaps with a single event of deterministic event length. We note that it is only in the cases of overlap between a query segment and an event that a presence label can occur under the assumed annotator model, and that the expectation in label accuracy over these cases therefore can be viewed as the expected presence label accuracy. For the remainder of the paper we will simply write expected label accuracy when referring to the expectation over the overlapping cases, unless explicitly stated otherwise.

In the same section we derive the optimal query length with respect to the expected label accuracy, the maximum expected label accuracy and the number of query segments needed (proxy for annotation cost). In Section~\ref{sec:theory_event_distribution} we explain how the expression for expected label accuracy can be used in the case of a single event of stochastic length, and in Section~\ref{sec:theory_multi_events} we explain under which conditions this can be used when multiple events can occur. Finally, we derive a closed form expression for the expected label accuracy of an audio recording with multiple events of stochastic length in Section~\ref{sec:expected_label_accuracy_all_cases}, and provide an alternative interpretation of the theory in Section~\ref{sec:ratio_based}.

\subsection{The FIX Weak Labeling Method}
\label{sec:fix_weak_labeling_method}
The FIX weak labeling method, commonly used in practice, splits the audio recording into fixed and equal length query segments, and then an annotator is asked to provide either a presence or absence label for each of the query segments. 
Let $B_{\text{FIX}}$ denote the number of query segments used, then the query segments for an audio recording of length $T$ are defined as
\begin{equation}
\label{eq:fix}
    \sQ_{\text{FIX}} = \{(a_0, b_0), (a_1, b_1), \dots, (a_{B_{\text{FIX}}-1}, b_{B_{\text{FIX}}-1})\} = \{q_0, \dots, q_{B_{\text{FIX}}-1}\},
\end{equation}
where the start and end timings of each query segment is $q_i = (a_i, b_i) = (id_q, (i+1)d_q)$ and the fixed query segment length is $d_q = T / B_{\text{FIX}} $. We illustrate this in Figure~\ref{fig:fix_weak_labeling}, where the presence criterion for the annotator is $\gamma=0.5$. There are three presence events and four absence events, and using only $B_{\text{FIX}}=7$ query segments results in annotations with an average label accuracy that is lower than $1$. 

\begin{figure}[ht!]
    \centering
    \includegraphics[width=0.8\linewidth]{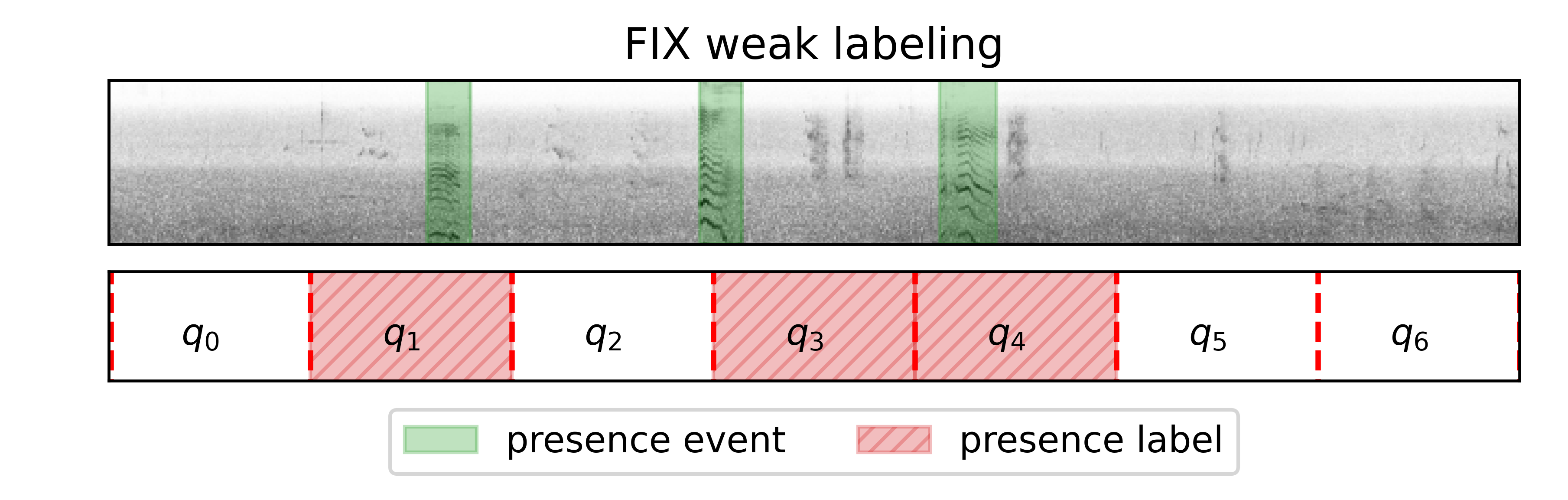}
    \caption{
    Illustration of the FIX weak labeling method. The audio recording contains presence events (green). The FIX method divides the recording into fixed-length query segments (e.g., $q_0$ to $q_6$). Note how the alignment between segments and presence events affects the accuracy of presence labels (red hatched).
    } 
    \label{fig:fix_weak_labeling}
\end{figure}

We want to find an expression for the expected label accuracy for a given data distribution and query segment length. In addition, we want to understand the query length that maximize the expected label accuracy.

\subsection{The Expected Label Accuracy of a Query Segment given Event Overlap}
\label{sec:expected_label_accuracy_given_overlap}

To derive a tractable closed-form expression, we analyze a simplified data distribution where each recording of length $T$ contains a single event of deterministic length $d_e$. 
This idealized case is the simplest possible annotation scenario, and the resulting accuracy can therefore be interpreted as a theoretical upper bound: any added complexity, such as multiple events or variable event lengths, introduces additional opportunities for error.


\begin{figure}
    \centering
    \includegraphics[width=0.8\linewidth]{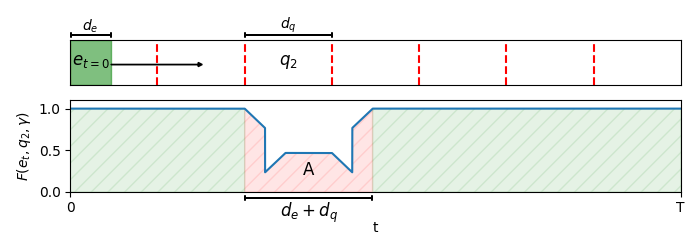}
    \caption{
    \textit{Top panel:}  A single event ($e_t$) of length $d_e$ can occur at various end times ($t$) within the recording of length $T$. \textit{Bottom panel:} The resulting label accuracy for query segment $q_2$ (arbitrarily chosen for illustration) of length $d_q$ as a function of the event's end time ($t$). Overlap between the event and the query segment leads to segment label noise and a reduced label accuracy, which in this case occur when $t\in[a_2, a_2+d_e+d_q]$ where $a_2$ is the start time of $q_2$. The red hatched area ($A$) represents the cumulative label accuracy during these overlapping scenarios.
    The figure illustrates the evolution of the accuracy function $F(e_t,q,\gamma)$ as the event end time sweeps across the segment. The apparent central alignment at 0.5 is specific to this example and should not be interpreted as a general property of FIX labeling.
    }
    \label{fig:proof_idea}
\end{figure}

The setup is illustrated in the upper panel of Figure~\ref{fig:proof_idea}, where a single event $e_t$ of length $d_e$ can occur at any time $t\in[0, T]$ (indicated by the arrow). The bottom panel of Figure~\ref{fig:proof_idea} shows the label accuracy for a specific query segment ($q_2$) as the end time ($t$) of the event varies. The area (A) highlighted in hatched red indicates the label accuracy in the cases of overlap between the query segment and the event, and the area in hatched green indicate the label accuracy in the cases of no overlap, which is by the definition of the annotator model is always $1$. Crucially, while this figure illustrates the accuracy for query segment $q_2$, the shape of this accuracy function remains the same for other query segments; only its position along the x-axis would change.  


To simplify the mathematical analysis, without loss of generality, we can fix the query segment to start at time $0$, $q=(0, d_q)$, and represent the event with its ending time $t$ as $e_t = (t-d_e, t)$. In this way, $t\in[0, d_e+d_q]$ describes all possible overlap occurrences. That is, when $t=0$ the event ends at the start of the query segment, and when $t=d_e+d_q$ the event starts at the end of the query segment. To formalize this, we can express the expected label accuracy in case of overlap by integrating over all possible event end times ($t$) where overlap occurs: 
\begin{align}
\label{eq:_integral_1}
    \E_{\rt \sim p}\left[F(e_{\rt}, q, \gamma)\right] &= \int_{0}^{d_e + d_q}F(e_t, q, \gamma)p(t)\mathrm{d}t \\
    \label{eq:_integral_2}
    &= \frac{1}{d_e + d_q} \int_{0}^{d_e + d_q}F(e_t, q, \gamma)\mathrm{d}t \\
    \label{eq:normalized_area}
    &= \frac{A}{d_e + d_q},
\end{align}
where $\rt \sim p$ denotes a random variable $\rt$ distributed according to a distribution $p$, and $p(t)$ denotes the probability of realization $t$. We assume that the distribution of the relative offsets $t$ between events and overlapping query segments is uniformly distributed (empirically verified in Appendix~\ref{app:uniform_assumption}). There are two sources of variation that makes this plausible: (i) the start time of the recording varies depending on when the recording session was started, and (ii) the start time of the event varies depending on when the sound source emits the event. Note that this assumption is likely to not hold if $d_q \gg d_e$, but that leads to very weak labels which is not wanted in practice. Using this assumption we get $p(\rt) = 1/(d_e + d_q)$, and by observing that the integral $\int_{0}^{d_e+d_q}F(e_t, q, \gamma)\mathrm{d}t$ describes the hatched red area denoted $A$ in Figure~\ref{fig:proof_idea} we arrive at the final expression in Eq.~\ref{eq:normalized_area}. 

Remember that absence labels can occur when there is no overlap (always correct) and when there is overlap but the presence criterion is not fulfilled, and presence labels can only occur when there is overlap and the presence criterion is fulfilled. Therefore, inaccurate labels only occur in the case of overlap. The expected label accuracy in the case of overlap therefore describes the accuracy of the labels when segment label noise can occur, which happens around the boundaries of the true event.

In Appendix~\ref{app:thm1} we show how to express $A$ in terms of the event length $d_e$, the query segment length $d_q$ and the presence criterion $\gamma$ under the assumption that the annotator presence criterion can be fulfilled ($d_q \geq \gamma d_e$), and that it can not be fulfilled ($d_q < \gamma d_e$). Finally, we arrive at the following four main theorems:

\begin{theorem}
\label{thm:expected_iou}
The expected label accuracy in case of overlap between a query segment $q$ of length $d_q$ and a single event $e$ of deterministic length $d_e$ is
\begin{equation}
\label{eq:expected_iou}
    f(d_q) = \E_{\rt \sim p}\left[F(e_{\rt}, q, \gamma)\right] = \begin{cases}
    \frac{d_{e} \left(2 \gamma d_{q} - 2 \gamma^2 d_{e} + d_{q}\right)}{d_{q} \left(d_{e} + d_{q}\right)}, & \text{ if } d_q \geq \gamma d_e, \\
    \frac{d_q}{d_e + d_q}, & \text{ if } d_q < \gamma d_e,
    \end{cases}
\end{equation}
when the presence criterion for the annotator is $\gamma$.
\end{theorem}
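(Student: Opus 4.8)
\section*{Proof proposal}

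The plan is to reduce everything to the geometry of the overlap length $w(t) := |e_t \cap q|$ viewed as a function of the event end time $t$, and then integrate the two branches of $F$ against the uniform density. With $q = (0, d_q)$ and $e_t = (t - d_e, t)$, the overlap is $w(t) = \min(t, d_q) - \max(t - d_e, 0)$ on its support $[0, d_e + d_q]$, which is a trapezoid: it rises with unit slope on $[0, m]$, sits at the plateau height $m$ on $[m, M]$, and falls linearly back to $0$ on $[M, d_e + d_q]$, where $m = \min(d_e, d_q)$ and $M = \max(d_e, d_q)$. The one identity I would record up front is the total area under this trapezoid, $\int_0^{d_e + d_q} w(t)\,dt = m M = d_e d_q$; it does most of the bookkeeping later.

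Next I would translate the annotator rule into a threshold on $t$. The presence criterion $h(e_t, q) = w(t)/d_e \geq \gamma$ is equivalent to $w(t) \geq \gamma d_e$, and since the maximum of $w$ is $m = \min(d_e, d_q)$, the criterion is satisfiable for some $t$ exactly when $\min(d_e, d_q) \geq \gamma d_e$; because $\gamma \leq 1$ this holds automatically when $d_q \geq d_e$ and reduces to $d_q \geq \gamma d_e$ when $d_q < d_e$. This is precisely the case split in the statement: $d_q < \gamma d_e$ is exactly the regime where no query position ever earns a presence label.

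For the easy branch $d_q < \gamma d_e$ the annotator always reports absence, so $F = (d_q - w(t))/d_q$ everywhere and $A = \frac{1}{d_q}\big[(d_e+d_q)d_q - \int_0^{d_e+d_q} w\,dt\big] = \frac{1}{d_q}\big[(d_e+d_q)d_q - d_e d_q\big] = d_q$, giving $f(d_q) = A/(d_e+d_q) = d_q/(d_e+d_q)$ at once from the area identity. For the main branch $d_q \geq \gamma d_e$ I would solve $w(t) = \gamma d_e$ on the rising and falling edges. Since $\gamma d_e \leq m$ in this regime, both crossings land on the linear pieces, yielding by symmetry the present interval $P = [\gamma d_e,\, d_q + d_e(1-\gamma)]$ whose complement $\bar P$ has total length $2\gamma d_e$. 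On the two absent pieces $w$ is linear through the origin (and its mirror), so $\int_{\bar P} w\,dt = 2 \cdot \tfrac{1}{2}(\gamma d_e)^2 = \gamma^2 d_e^2$, hence $\int_P w\,dt = d_e d_q - \gamma^2 d_e^2$ by the area identity. Assembling $A = \frac{1}{d_q}\big[\int_P w + \int_{\bar P}(d_q - w)\big] = \frac{2\int_P w}{d_q} + |\bar P| - d_e$ and substituting gives $A = d_e(1 + 2\gamma) - 2\gamma^2 d_e^2/d_q$, which after dividing by $d_e + d_q$ matches the claimed first branch.

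The only real obstacle is bookkeeping discipline: I must confirm that the threshold crossings genuinely lie on the sloped edges and not on the plateau (this is where the hypothesis $d_q \geq \gamma d_e$ enters, and it must be checked in both sub-regimes $d_e \leq d_q$ and $d_e > d_q$, since the identity of $m$ changes between them), and I must keep the two branches of $F$ straight when splitting the integral. Everything else is elementary integration of piecewise-linear functions, rendered painless by the identity $\int w = d_e d_q$.
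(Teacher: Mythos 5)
Your proof is correct, and it reaches the stated formula by a cleaner route than the paper's. The paper evaluates $A=\int_0^{d_e+d_q}F(e_t,q,\gamma)\,dt$ by locating the breakpoints $t_0,\dots,t_5$ of the piecewise-linear accuracy curve, tabulating $F$ at each, summing five trapezoid areas, and doing this separately for the sub-cases $d_e\ge d_q$ and $d_e<d_q$ (the second of which it actually leaves to the reader). You instead integrate the overlap function $w(t)=|e_t\cap q|$ directly, and your key lemma is the trapezoid-area identity $\int_0^{d_e+d_q}w(t)\,dt=\min(d_e,d_q)\max(d_e,d_q)=d_e d_q$, which does not appear in the paper. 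Combined with the observation that the presence region is the superlevel set $P=\{t: w(t)\ge\gamma d_e\}$ whose complement contributes only the two triangular edge pieces $\int_{\bar P}w=\gamma^2 d_e^2$, this collapses the paper's two sub-cases into one computation (the only case-dependence left is the check that $\gamma d_e\le\min(d_e,d_q)$ so the threshold crossings sit on the sloped edges, which you correctly flag and which is exactly where the hypothesis $d_q\ge\gamma d_e$ enters). Your treatment of the branch $d_q<\gamma d_e$ via $A=\frac{1}{d_q}[(d_e+d_q)d_q-d_e d_q]=d_q$ likewise matches the paper's answer in one line, and your identification of $d_q<\gamma d_e$ as precisely the regime where no query position can earn a presence label is a nice justification of the case split that the paper only asserts. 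Both arguments are elementary; the paper's is more pictorial and explicit about the geometry of each configuration, while yours is shorter, unified across the sub-cases, and makes the role of each hypothesis more transparent.
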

\begin{proof} See Appendix~\ref{app:thm1} for the proof. We show how to express the area $A$ in Eq.~\ref{eq:normalized_area} in terms of $d_e$, $d_q$ and $\gamma$ for the two assumptions: $d_q \geq \gamma d_e$, and $d_q < \gamma d_e$.
\end{proof}


\begin{theorem}
\label{thm:fix_optimal_query_length}
The query length that maximizes the expected label accuracy in case of overlap for a given event length $d_e$ is 
\begin{equation}
\label{eq:fix_optimal_query_length}
    d_q^* = d_e\gamma\frac{2\gamma + \sqrt{4\gamma^2 + 4\gamma + 2}}{2\gamma + 1}.
\end{equation}
\end{theorem}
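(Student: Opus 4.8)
The plan is to treat $f(d_q)$ from Theorem~\ref{thm:expected_iou} as a function of $d_q$ on $(0,\infty)$ with $d_e$ and $\gamma$ held fixed, and to locate its global maximizer. First I would dispose of the lower branch $d_q < \gamma d_e$, where $f(d_q) = d_q/(d_e+d_q)$. Its derivative is $d_e/(d_e+d_q)^2 > 0$, so $f$ is strictly increasing there and attains no interior maximum; the maximizer must therefore lie in the upper branch $d_q \geq \gamma d_e$. One also checks that $f(d_q) \to 0$ as $d_q \to \infty$ on the upper branch, so a finite maximizer exists.

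On the upper branch I would write the numerator as $N(d_q) = d_e\big((2\gamma+1)d_q - 2\gamma^2 d_e\big)$ and the denominator as $D(d_q) = d_q(d_e+d_q)$, and apply the quotient rule. Setting the numerator $N'D - ND'$ of $f'$ to zero and cancelling the common factor $d_e$ collapses the stationarity condition to the quadratic
\begin{equation*}
(2\gamma+1)\,d_q^2 - 4\gamma^2 d_e\, d_q - 2\gamma^2 d_e^2 = 0.
\end{equation*}
Solving with the quadratic formula gives a discriminant $16\gamma^4 d_e^2 + 8\gamma^2 d_e^2(2\gamma+1) = 8\gamma^2 d_e^2(2\gamma^2+2\gamma+1)$, whose square root is $2\gamma d_e\sqrt{2}\,\sqrt{2\gamma^2+2\gamma+1} = 2\gamma d_e\sqrt{4\gamma^2+4\gamma+2}$. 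Since the product of the two roots equals $-2\gamma^2 d_e^2/(2\gamma+1) < 0$, exactly one root is positive, and keeping it yields precisely
\begin{equation*}
d_q^* = d_e\gamma\,\frac{2\gamma + \sqrt{4\gamma^2+4\gamma+2}}{2\gamma+1},
\end{equation*}
as claimed.

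It remains to confirm that this stationary point is admissible and is indeed a maximum. Admissibility ($d_q^* \geq \gamma d_e$) reduces, after dividing by $\gamma d_e$ and clearing the denominator, to $\sqrt{4\gamma^2+4\gamma+2} \geq 1$, i.e.\ $(2\gamma+1)^2 \geq 0$, which always holds. For the second-order condition I would observe that the numerator of $f'$ is a downward-opening parabola in $d_q$ (leading coefficient $-(2\gamma+1) < 0$) with one negative and one positive root, so $f' > 0$ for $0 < d_q < d_q^*$ and $f' < 0$ for $d_q > d_q^*$; hence $d_q^*$ is the unique maximizer on the upper branch. Since the upper-branch value there dominates the supremum $\gamma/(1+\gamma)$ of the lower branch, it is the global maximizer.

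The main obstacle I anticipate is purely algebraic: correctly expanding $N'D - ND'$ so that the $d_q^2$ and constant terms survive with the right signs, and then matching the radical $\sqrt{2}\,\sqrt{2\gamma^2+2\gamma+1}$ to the compact form $\sqrt{4\gamma^2+4\gamma+2}$ appearing in the statement. The piecewise definition of $f$ also requires a little care, since the function is discontinuous at $d_q = \gamma d_e$ (jumping up to $1/(1+\gamma)$), but this only strengthens the conclusion that the maximum lies in the interior of the upper branch.
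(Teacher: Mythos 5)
Your proposal is correct and follows the same core computation as the paper's proof: apply the quotient rule to the upper branch, reduce the stationarity condition to the quadratic $(2\gamma+1)d_q^2 - 4\gamma^2 d_e d_q - 2\gamma^2 d_e^2 = 0$, and select the positive root. The one genuine difference is in how maximality is certified. The paper evaluates $f$ at the domain boundaries and proves $f(d_q^*) \geq f(\gamma d_e) = 1/(1+\gamma)$ by an algebraic inequality that reduces to $(2\gamma+1)^2 \geq 0$, which requires importing the closed form of $f(d_q^*)$ from Theorem~\ref{thm:max_iou}. You instead observe that the numerator of $f'$ is a downward-opening parabola with roots of opposite sign, so $f$ is increasing on $[\gamma d_e, d_q^*)$ and decreasing on $(d_q^*, \infty)$; combined with the lower branch being strictly increasing with supremum $\gamma/(1+\gamma) \leq 1/(1+\gamma) = f(\gamma d_e) \leq f(d_q^*)$, this yields the global maximum without any further algebra. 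Your sign analysis is self-contained and arguably cleaner than the paper's boundary comparison, and it also handles the admissibility check $d_q^* \geq \gamma d_e$ by the same reduction to $\sqrt{4\gamma^2+4\gamma+2} \geq 1$ that the paper uses. No gaps.
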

\begin{proof}
See Appendix~\ref{app:thm2} for the proof. We compute the derivative of $f(d_q)$ with respect to $d_q$, and show that $d_q^*$ is the maximum.
\end{proof}


\begin{theorem}
\label{thm:max_iou}
The maximum expected label accuracy in case of overlap between a query segment of length $d_q$ and an event of length $d_e$ when $d_q \geq \gamma d_e$ is
\begin{equation}
    \label{eq:max_iou}
    f^*(\gamma) = f(d_q^*) = 2\gamma\left(2\gamma + 1 - \sqrt{4\gamma^2+4\gamma + 2} \right) + 1.
\end{equation}
\end{theorem}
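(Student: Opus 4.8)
The plan is to substitute the optimizer $d_q^*$ from Theorem~\ref{thm:fix_optimal_query_length} into the closed-form expression $f(d_q)$ from Theorem~\ref{thm:expected_iou} and simplify. First I would confirm that $d_q^*$ lands in the correct branch of $f$: writing $s = \sqrt{4\gamma^2 + 4\gamma + 2}$ as shorthand, we have $d_q^* = d_e\gamma(2\gamma + s)/(2\gamma + 1)$, and the condition $d_q^* \geq \gamma d_e$ is equivalent to $s \geq 1$, which holds since $s^2 = 4\gamma^2 + 4\gamma + 2 \geq 2$ for every $\gamma \in (0,1]$. Hence the relevant branch is $f(d_q) = d_e(2\gamma d_q - 2\gamma^2 d_e + d_q)/(d_q(d_e + d_q))$.

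The first key simplification is scale invariance. Writing $d_q^* = k\,d_e$ with $k = \gamma(2\gamma + s)/(2\gamma + 1)$, the numerator of $f(d_q^*)$ carries a factor $d_e^2$ and so does the denominator, so $d_e$ cancels and $f(d_q^*) = (k(2\gamma + 1) - 2\gamma^2)/(k(1 + k))$ becomes a function of $\gamma$ alone. This matches the expectation that label accuracy is invariant under a common rescaling of all durations. The numerator then collapses immediately, since $k(2\gamma + 1) = \gamma(2\gamma + s)$ gives $k(2\gamma + 1) - 2\gamma^2 = \gamma s$.

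It then remains to show $\gamma s/(k + k^2) = 2\gamma(2\gamma + 1 - s) + 1$. Writing $P = 4\gamma^2 + 2\gamma + 1$, the target right-hand side is simply $P - 2\gamma s$, and a short computation gives $k + k^2 = \gamma(Q + sP)/(2\gamma + 1)^2$ with $Q = 8\gamma^3 + 8\gamma^2 + 4\gamma$. After clearing denominators the claim becomes $s(2\gamma + 1)^2 = (P - 2\gamma s)(Q + sP)$. The work is driven by two identities that eliminate the square root: $s^2 = 4\gamma^2 + 4\gamma + 2$ and its consequence $Q = 2\gamma s^2$. Using these to reduce every power of $s$ beyond the first, the right-hand side collapses to $s(P^2 - 4\gamma^2 s^2)$, so the claim reduces to the polynomial identity $(2\gamma + 1)^2 = P^2 - 4\gamma^2 s^2$. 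Expanding $P^2$ and $4\gamma^2 s^2$ verifies it: the degree-four and degree-three terms cancel, leaving $4\gamma^2 + 4\gamma + 1$ on both sides.

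The main obstacle is purely the algebraic bookkeeping around the irrational term $s$, since naive substitution and expansion produces a heavy expression. The trick that keeps it manageable is to treat $s$ as a formal symbol satisfying $s^2 = 4\gamma^2 + 4\gamma + 2$ and to repeatedly reduce $s^2$ and $s^3$ to at most linear-in-$s$ form, while spotting the identity $Q = 2\gamma s^2$ early so that the quotient collapses cleanly before the final polynomial check.
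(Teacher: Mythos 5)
Your proposal is correct and follows essentially the same route as the paper's proof in Appendix~\ref{app:thm3}: substitute $d_q^*$ into the $d_q \geq \gamma d_e$ branch of $f$, pass to the dimensionless ratio $\delta = d_q/d_e$ (your scale-invariance observation), and note that the numerator collapses to $\gamma\sqrt{4\gamma^2+4\gamma+2}$. The only place you go beyond the paper is the final simplification, which the paper defers to ``direct expansion or a symbolic algebra tool'': your reduction via $s^2 = 4\gamma^2+4\gamma+2$ and $Q = 2\gamma s^2$ down to the polynomial identity $(2\gamma+1)^2 = P^2 - 4\gamma^2 s^2$ carries that verification out by hand, and it checks out.
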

\begin{proof}
See Appendix~\ref{app:thm3} for the proof. We substitute $d_q$ for $d_q^*$ in Eq.~\ref{eq:expected_iou}.
\end{proof}


\begin{theorem}
\label{thm:fix_number_of_queries}
The number of queries $B^*_{\text{FIX}}$ (cost) that are needed by FIX to maximize the expected label accuracy in case of overlap for an audio recording of length $T$ when $d_e=1$ is 
\begin{equation}
\label{eq:b_fix_1}
    B^*_{\text{FIX}} = \frac{T}{d_q^*}.
\end{equation}
\end{theorem}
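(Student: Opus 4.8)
The plan is to derive the optimal query count directly from the definitional constraint linking $B_{\text{FIX}}$ and $d_q$, combined with the optimal query length already established in Theorem~\ref{thm:fix_optimal_query_length}. The result is essentially a substitution, so the work lies in justifying that the optimization over the annotation budget reduces to the optimization over segment length that Theorem~\ref{thm:fix_optimal_query_length} already solves.

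First I would recall from the FIX construction in Eq.~\ref{eq:fix} that a recording of length $T$ partitioned into $B_{\text{FIX}}$ equal segments has segment length $d_q = T/B_{\text{FIX}}$. Solving for the number of segments gives the exact relation $B_{\text{FIX}} = T/d_q$, valid for every admissible segment length $d_q > 0$. Here I treat $B_{\text{FIX}}$ as a real-valued proxy for annotation cost, relaxing the integrality constraint that would otherwise restrict $d_q$ to the discrete values $T, T/2, T/3, \dots$; this relaxation is what makes the optimization continuous and is consistent with how cost is used throughout the paper.

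Next I would observe that the expected label accuracy $f(d_q)$ from Theorem~\ref{thm:expected_iou} depends on the annotation budget solely through the segment length $d_q = T/B_{\text{FIX}}$. Since the map $B_{\text{FIX}} \mapsto T/B_{\text{FIX}}$ is a strictly decreasing bijection from $(0,\infty)$ onto $(0,\infty)$, maximizing $f$ over the query count is equivalent to maximizing $f$ over $d_q$. Theorem~\ref{thm:fix_optimal_query_length} identifies the unique maximizer as $d_q^*$, so substituting $d_q = d_q^*$ into $B_{\text{FIX}} = T/d_q$ yields $B^*_{\text{FIX}} = T/d_q^*$, as claimed. Setting $d_e = 1$ simply fixes the unit of length to the event duration, so that $d_q^*$, and hence $B^*_{\text{FIX}}$, depends only on $T$ and the presence criterion $\gamma$.

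Because the result follows by direct substitution, there is no computational obstacle; the only point requiring care is the justification that optimizing over $d_q$ is equivalent to optimizing over $B_{\text{FIX}}$ (the monotone bijection argument) together with the explicit acknowledgment that $B_{\text{FIX}}$ is relaxed to a continuous variable. In a fully rigorous treatment one might additionally remark on rounding $T/d_q^*$ to the nearest feasible integer, but for the purposes of this cost analysis the continuous expression is the intended quantity.
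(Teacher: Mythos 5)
Your proof is correct and takes essentially the same route as the paper, which simply notes that $T/B^*_{\text{FIX}} = d_q^*$ and invokes Theorem~\ref{thm:fix_optimal_query_length}. Your additional remarks on the monotone bijection between $B_{\text{FIX}}$ and $d_q$ and on relaxing the integrality of $B_{\text{FIX}}$ make explicit what the paper leaves implicit, but do not constitute a different argument.
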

\begin{proof}
$T/B^*_{\text{FIX}} = d_q^*$, which by Theorem~\ref{thm:fix_optimal_query_length} leads to maximum label accuracy.
\end{proof}



In summary, Theorem~\ref{thm:expected_iou} gives us an expression $f(d_q)$ for the expected label accuracy when query segments of length $d_q$ are used to detect events of length $d_e$ and the presence criterion for the annotator is $\gamma$. We use this to find the query segment length $d_q^*$ that maximize the expected label accuracy, leading to Theorem~\ref{thm:fix_optimal_query_length}. Theorem~\ref{thm:fix_optimal_query_length} show the query segment length $d_q^*$ that maximizes expected label accuracy for a given event length and annotator criterion. Further, by inserting $d_q^*$ into Theorem~\ref{thm:expected_iou}, $f^*(\gamma) = f(d_q^*)$, we get Theorem~\ref{thm:max_iou}, which is the maximum achievable expected label accuracy for a given annotator criterion $\gamma$. We have omitted the case $d_q < \gamma d_e$ when deriving $f^*(\gamma)$, since maximizing the expected label accuracy in the case when the annotator presence criterion can not be fulfilled is not very interesting, since we can not get presence labels. Note that $f^*(\gamma)$ is a function of only $\gamma$, meaning that the maximum expected label accuracy is independent of the target event length when considering a single deterministic event. Finally, Theorem~\ref{thm:fix_number_of_queries} show that an annotator needs to weakly label $B^*_{\text{FIX}}$ query segments for each audio recording to achieve the maximum label accuracy in expectation, which can be seen as a proxy for annotation cost.

There is arguably no simpler audio data distribution to annotate than when recordings only contain a single event of deterministic length (except for when no event occurs at all). We can therefore treat $f^*(\gamma)$ as an upper bound on the maximum expected label accuracy for any audio distribution. We demonstrate this empirically in the results in Section~\ref{sec:results}. However, in practice audio recordings often contain events that vary both in length and number. Let us therefore consider how the derived theory can be useful also in these cases.


\subsection{Stochastic Event Length}
\label{sec:theory_event_distribution}
Events may vary in length according to some event length distribution. Let $p(d_e)$ denote the probability of the outcome that an event has length $d_e$, and let $d_e\sim p(d_e)$ denote that $d_e$ is a sample from that distribution. The expected label accuracy over a distribution of event lengths for a given $\gamma$ and query segment length $d_q$ can then be computed as
\begin{align}
\E_{d_e\sim p(d_e)}\left[f(d_q)\right]
\label{eq:expected_query_iou_distribution}
&= \int_{0}^{\infty} f(d_q)p(d_e) \mathrm{d}d_e.
\end{align}
While we do not provide a closed form solution for this, we can solve the integral in Eq.~\ref{eq:expected_query_iou_distribution} by numerical integration. Note that $d_q^*$ in Theorem~\ref{thm:fix_optimal_query_length} depends on the single event length $d_e$, and to find it for a distribution we would need to solve Eq.~\ref{eq:expected_query_iou_distribution} for a range of $d_q$ and find the one that leads to the best label accuracy. However, for some event length distributions, setting $d_e$ to the average of the distribution turns out to be a good heuristic. We perform a simulation study in Section~\ref{sec:event_distribution} to support these claims.


\subsection{Multiple Events}
\label{sec:theory_multi_events}
There may be multiple ($M$) events present in a given audio recording. In Figure~\ref{fig:multiple_events} we show the label accuracy for all possible occurrences of a query segment $q_t$ in a recording with two events ($M=2$). Note that we have put the subscript $t$ on the query segment ($q_t$) instead of the event as in the prior analysis. This formulation is entirely equivalent, but when talking about multiple events it is more intuitive to consider them as fixed in time for a given recording, and that the query segments occur relative them at random. There are now two regions where overlap occurs, one around $e_1$ and one around $e_2$. On average we get $2A/2(d_e + d_q) = A/(d_e + d_q) = f(d_q)$. That is, the theory we derived for the single event case explains the multiple event case.

\begin{figure}
    \centering
    \includegraphics[width=0.8\linewidth]{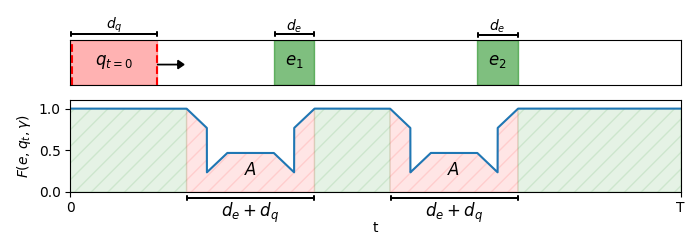}
    \caption{
    \textit{Top panel:}  Two events ($M=2$) of length $d_e$ that are fixed in time within a recording of length $T$, and a query segment $q_t = (-d_q + t, t)$. \textit{Bottom panel:} The resulting label accuracy of $q_t$ for $t\in [0, T-d_q]$, simulating that the $q_t$ can appear anywhere at random in time in relation to the events. As before, when there is overlap between the query segment and an event the label accuracy is below $1$, otherwise it is always $1$.
    }
    \label{fig:multiple_events}
\end{figure}

However, for this to hold we need to assume that for any event the closest other event is least $d_q$ away in time. In Figure~\ref{fig:multiple_events} this holds since the start of $e_2$ is at least $d_q$ away from the end of $e_1$. If this assumption holds then the expected label accuracy for multiple events is $f(d_q)$. The assumption is plausible if events are sparse in relation to $d_q$. Note that $d_q^* \in (0, d_e\frac{2 + \sqrt{10}}{3}]$ for $\gamma \in (0, 1]$ according to Theorem~\ref{thm:fix_optimal_query_length}. That is, when considering the optimal query length $d_q^*$ this assumption translates to that events should be no closer than approximately $1.72d_e$ for $\gamma = 1$, $0.81d_e$ for $\gamma = 0.5$, and $0$ for $\gamma \rightarrow 0$. We perform a simulation study in Section~\ref{sec:multi_events} to see the effect of breaking this assumption, and we leave it to future work to derive the expected label accuracy in case of overlap for multiple events.

\subsection{The Expected Label Accuracy of an Audio Recording}
\label{sec:expected_label_accuracy_all_cases}
We now know the expected label accuracy of a query segment given event overlap, and how to use this for a stochastic event lengths and multiple events. We can use this to derive an expression for the expected label accuracy of and audio recording of finite length ($T$) that has multiple ($M$) stochastic event lengths ($d_e\sim p(d_e)$).

\begin{theorem}
\label{thm:label_accuracy}
The expected label accuracy for an audio recording of length $T$, with $M$ events of stochastic event length $d_e \sim p(d_e)$ that are spaced at least $d_q$ apart is
\begin{equation}
\label{eq:label_accuracy_recording}
    \E_{d_e\sim p(d_e)}\left[- \frac{2 M d_{e}^{2} \gamma^{2}}{T d_{q}} + \frac{2 M d_{e} \gamma}{T} - \frac{M d_{q}}{T} + 1 \right].
\end{equation}
\begin{proof}
We will do this proof by picture. In Figure~\ref{fig:multiple_events} we have two events ($M=2$), in general for $M$ events the accumulated label accuracy in the cases of overlap is $MA$ (the sum of the hatched red areas), the total amount of overlapping cases is $M(d_e + d_q)$ and the total amount of non-overlapping cases is therefore $T-M(d_e+d_q)$ for an audio recording of length $T$. In the case of no overlap, the label accuracy is always $1$, which means that the accumulated label accuracy in the case of no overlap (sum of the green hatched areas) is $T-M(d_e + d_q)$. Normalizing for the entire duration of the recording we arrive at
\begin{equation}
\frac{AM + T-M(d_e + d_q)}{T} = - \frac{2 M d_{e}^{2} \gamma^{2}}{T d_{q}} + \frac{2 M d_{e} \gamma}{T} - \frac{M d_{q}}{T} + 1,
\end{equation}
and as before we can simply compute an expectation over the event length distribution.
\end{proof}
\end{theorem}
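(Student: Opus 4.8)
The plan is to reduce the whole recording-level quantity to the single-event overlap area $A$ already characterized in Theorem~\ref{thm:expected_iou}, and then assemble the $M$-event accuracy by a disjointness/accounting argument. First I would recover the explicit form of $A$ from Eq.~\ref{eq:normalized_area}. Since that equation states $f(d_q) = A/(d_e + d_q)$, multiplying the first branch of Eq.~\ref{eq:expected_iou} (the case $d_q \geq \gamma d_e$, which is the only relevant regime, as otherwise no presence labels are possible) by $(d_e + d_q)$ gives $A = d_e + 2\gamma d_e - 2\gamma^2 d_e^2/d_q$. This is the cumulative label accuracy contributed by a single event across its overlap band of width $d_e + d_q$.

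Next I would set up the sliding-query picture of Figure~\ref{fig:multiple_events}: fix the $M$ events in time and let the query segment $q_t$ range over the recording, so that the recording-level expected accuracy is the total accumulated accuracy divided by the duration $T$ (equivalently, $F$ integrated against the uniform density $1/T$). The accumulated accuracy splits into an overlap part and a non-overlap part. The spacing assumption, that consecutive events are at least $d_q$ apart, is exactly what guarantees that the $M$ overlap bands are pairwise disjoint and that within each band the accuracy profile has the identical shape analyzed in the single-event case, so that no query segment ever straddles two events at once. Hence the overlap part contributes $M\cdot A$, the total overlap measure is $M(d_e + d_q)$, the remaining measure $T - M(d_e + d_q)$ is non-overlapping, and there the accuracy is identically $1$ and contributes $T - M(d_e + d_q)$.

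Dividing the total $MA + \bigl(T - M(d_e + d_q)\bigr)$ by $T$ and substituting the expression for $A$, the $M d_e$ terms cancel and one obtains the deterministic-length accuracy $-2M d_e^2\gamma^2/(T d_q) + 2M d_e \gamma/T - M d_q/T + 1$. The final step promotes $d_e$ to a random variable and takes $\E_{d_e \sim p(d_e)}[\cdot]$; by linearity of expectation this is immediate once the deterministic identity is in place, yielding Eq.~\ref{eq:label_accuracy_recording}.

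The step I expect to be the main obstacle is justifying the disjointness claim rigorously rather than pictorially. One must verify that, under the $\geq d_q$ spacing condition, as $q_t$ sweeps it can overlap at most one event at any instant, so the per-event accuracy profiles add without interaction and each integrates to exactly $A$; events near the start or end of the recording additionally require the mild assumption that they lie far enough inside $[0,T]$ for their full band of width $d_e + d_q$ to fit, which is implicitly what makes the non-overlap measure $T - M(d_e + d_q) \geq 0$ meaningful. Once this geometric bookkeeping is granted, the remainder is the routine algebra above together with linearity of expectation.
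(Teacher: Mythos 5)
Your proposal is correct and follows essentially the same route as the paper: decompose the sweep of the query segment into $M$ disjoint overlap bands each contributing the area $A$ plus a non-overlap region of measure $T - M(d_e+d_q)$ contributing accuracy $1$, normalize by $T$, substitute the explicit form of $A$ from Theorem~\ref{thm:expected_iou}, and finish by linearity of expectation over $p(d_e)$. The paper presents this as a proof by picture and leaves the disjointness guaranteed by the $\geq d_q$ spacing assumption implicit, whereas you flag it (and the boundary-fit caveat) explicitly, but the underlying argument is the same.
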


Theorem~\ref{thm:label_accuracy} tells us the expected label accuracy under FIX weak labeling with query segment length $d_q$ for an audio recording of length $T$, with $M$ events of stochastic event length $d_e \sim p(d_e)$. If we want to account for class label noise, where the annotator gives the wrong label with probability $\rho$, this can be included by simply scaling the whole expression in Eq.~\ref{eq:label_accuracy_recording} by $(1-\rho)$. That is, the expected label accuracy for the cases of overlap allows us to express a variety of things about the expected label accuracy of an audio recording. 

However, note that we have $T$ in the denominator of all terms except the term that is $1$, meaning that if we let $T$ approach $\infty$, then the expected label accuracy approaches $1$. That is, considering the accuracy of both absence and presence labels equally can lead to hiding the effect that we want to understand in this paper, which is the effect of $d_q$ on the accuracy of the presence labels. We could derive a balanced accuracy in a similar way as above, but instead we choose to continue our analysis looking only at the expected label accuracy in the case of overlap. 

\subsection{Expected Label Accuracy given Overlap when $d_q = \delta d_e$}
\label{sec:ratio_based}
As a result of the proof for Theorem~\ref{thm:max_iou} in Appendix~\ref{app:thm3} we get an alternative dimensionless interpretation of the expected label accuracy when the query segment length is expressed as a factor of the event $d_q = \delta d_e$,
\begin{equation}
    f(\delta d_e) = \frac{(2\gamma + 1)\delta - 2\gamma^2}{\gamma(1+\gamma)},
\end{equation}
and an expression for the ratio that maximizes it
\begin{equation}
    \delta^* = \frac{d_q^*}{d_e} = \gamma \frac{2\gamma + \sqrt{2\gamma^2 + 2\gamma + 1}}{2\gamma + 1}.
\end{equation}
This alternative formulation illustrates that it is the ratio $\delta = d_q/d_e$ that affects the expected label accuracy of a single event, and not the absolute lengths $d_q$ and $d_e$. Further, we can use this interpretation to rewrite Theorem~\ref{thm:label_accuracy} as
\begin{equation}
    \E_{\delta \sim p(\delta)} \left[\frac{M d \delta \left(- \delta + 2 \gamma\right) - 2 M d \gamma^{2} + T \delta}{T \delta}\right],
\end{equation}
where $\delta$ denotes a random variable with probability distribution $p(\delta)$. 
\section{Simulating the Label Accuracy of FIX Weak Labeling}
\label{sec:simulation}

To validate the theory, we simulated FIX labeling of various audio recording distributions and compared the average simulated label quality with the theoretical results from Section~\ref{sec:expected_label_accuracy_given_overlap}. The code used for these simulations is released openly\footnote{\url{https://github.com/johnmartinsson/the-accuracy-cost-of-weakness}}.

We generated 1000 audio recordings of length $T=100$ seconds for each configuration. The number of events, $M$, and the event length distributions varied across simulations, as detailed below:

\begin{itemize}
    \item \textbf{Single Event with Deterministic Length:} We simulated recordings with $M=1$ event of deterministic length $d_e = 1$ second.

    \item \textbf{Single Event with Stochastic Length from Normal Distributions:} We drew event lengths from two normal distributions with the same mean but different variances ($\mathcal{N}(3, 0.1)$ and $\mathcal{N}(3, 1)$), and from two normal distributions with different means but the same variance ($\mathcal{N}(0.5, 0.1)$ and $\mathcal{N}(5, 0.1)$). For these simulations, $M=1$.
    
    \item \textbf{Single Event with Stochastic Length from Gamma Distributions:} We sample event lengths from two gamma distributions (offset by $0.5$ seconds due to computation cost) with different shape parameters but the same scale parameter ($\text{Gamma}(0.8, 1) + 0.5$ and $\text{Gamma}(0.2, 1) + 0.5$) with $M=1$.
    
    \item \textbf{Single Event with Stochastic Length from Real Length Sample:} We used the event length distributions for dog barks and baby cries from the NIGENS dataset~\citep{Trowitzsch2019} with $M=1$.
    
    \item \textbf{Multiple Events with Deterministic Length:}  We simulated recordings with multiple events ($M=30$ and $M=50$) where each event had a deterministic length of $d_e = 1$ second.
\end{itemize}

For recordings with stochastic event lengths or multiple events, the length of each of the $M$ events was sampled from the specified distribution. Each sampled event was then placed randomly within the recording. The start time $a_e$ of each event was drawn uniformly at random from $[0, T - d_e]$. If multiple events were present, overlapping events were merged into one presence event. For each generated audio recording, we simulated FIX labeling using different annotator presence criteria $\gamma \in [0.01, 0.99]$ and a range of query segment lengths $d_q$. The query segment lengths were linearly spaced between a small fraction of the minimum event length observed in the distribution and a value several times the maximum observed event length. 

We then computed the average label accuracy over the query segments that overlaps with an event in each recording. For each query segment $q$ we check if the annotator presence criterion ($h(e, q) \geq \gamma$) is fulfilled for any event $e \in E$, where $E$ is the set of all events that overlap with $q$. If this is true for any of the events then $q$ is given a presence label ($l_q = 1$) otherwise it is given an absence label ($l_q = 0$). The label accuracy is then computed in a similar way as in Eq.~\ref{eq:query_iou}, but since we can now have multiple events overlapping with the same query segment, we need to consider the union of all overlapping events $\cup_{e\in E}e$ when computing the label accuracy of assigning label $l_q$ to that query segment. The total amount of overlap becomes $|(\cup_{e\in E} e) \cap q|$ instead of $|e \cap q|$. However, when $M=1$ this is equivalent to Eq.~\ref{eq:query_iou} ($|(\cup_{e\in E} e) \cap q| = |e \cap q|$), since $|E| = 1$.


In this way, we simulated the effect of breaking the assumption that events are spaced at least $d_q$ apart, and could better understand the effect this had when compared to the derived theory. Finally, for each considered $\gamma$, we empirically determined the maximum average label accuracy across all tested query lengths and the corresponding optimal query length. These empirical results were then compared to the theoretical predictions.

\section{Results}
\label{sec:results}
In this section we present the results of the simulated annotation process, and show how these connect to the derived theory. We start by looking at the expected label accuracy and the query segment length that maximize the expected label accuracy for FIX and ORC weak labeling, and then we relate this to the annotation cost.

\subsection{Expected Label Accuracy given Overlap}

We evaluate how different annotator presence criteria ($\gamma$) influence the achievable label accuracy given overlap under FIX weak labeling. We first examine the case of a single event with a deterministic length, then extend our simulation study to stochastic event lengths, and finally to multiple events occurring within the same recording.

\subsubsection{Single Event with Deterministic Length}
The simulated results are derived using the simulation setup described in section~\ref{sec:simulation}, with $M=1$ (a single event) and $d_e=1$ (deterministic length). In Figure~\ref{fig:simple_simulation}, we show the maximum expected label accuracy given overlap (left) and the corresponding query length that maximize the label accuracy (right) for different $\gamma$. $f^*(\gamma)$ is the maximum expected label accuracy achievable with annotator presence criterion $\gamma$ for the considered event length. We can see that the simulated average label accuracy closely follows the expected label accuracy, and that the corresponding segment length leading to this maximum is the same in theory and simulation.

\begin{figure}[H]
    \centering
    \includegraphics[width=0.66\textwidth]{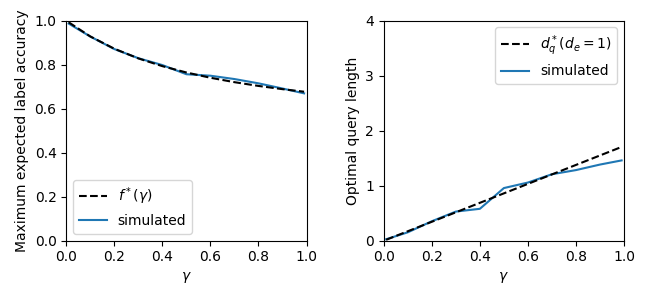}
    \caption{In the left panel we show the maximum expected label accuracy, $f^*(\gamma)$, for different $\gamma$, and the average maximum label accuracy from the simulations. In the right panel we show the query length that leads to this maximum label accuracy in theory, for $d_e = 1$, and in simulation. The theory follows the simulations well.}
    \label{fig:simple_simulation}
\end{figure}

In Figure~\ref{fig:simple_simulation} we see that if the annotator needs to hear more than $50$\% of the sound event to detect presence ($\gamma=0.5$) then the highest achievable label accuracy is $f^*(0.5) \approx 0.76$. This means that on average there is around $34$\% segment label noise around the presence labels. We also see that the query length that gives the maximum label accuracy is $d_q^* \approx 0.81$. The gap to the ORC weak labeling method which always gives a label accuracy of $1$, is large especially for large $\gamma$. In general, we can see how the maximum label accuracy deteriorates with a growing $\gamma$, and which query segment length to choose to maximize label accuracy in expectation. 

\subsubsection{Single Event with Stochastic Length}
\label{sec:event_distribution}
We now consider stochastic event lengths. We do this to better understand the effect of the event length distribution on the maximum expected label accuracy and the optimal query length. We solve the integral in Eq.~\ref{eq:expected_query_iou_distribution} by numerical integration over different event length distributions, and compare with the theory derived for a single deterministic event length and simulations. In each figure we present the derived theoretical rules $f^*(\gamma)$ and $d_q^*$ for the simplified event length distribution, the results from integration of Eq.~\ref{eq:expected_query_iou_distribution} with different event length distributions $p(d_e)$ (numerical), and the simulated results using the procedure described in section~\ref{sec:simulation} (simulated) where event lengths are sampled from different distributions. Note that, since $d_q^*$ is derived for a deterministic event length $d_e$, and require a choice of this value, we set $d_e$ to the average event length ($\mu$) for each distribution in these experiments as a heuristic. We then present the maximum expected label accuracy for different $\gamma$ (left in figures) and the query segment length that maximizes the expected label accuracy (middle in figures), and the histogram for the considered event length distributions (right in figures).

\begin{figure}
    \centering
    \includegraphics[width=\textwidth]{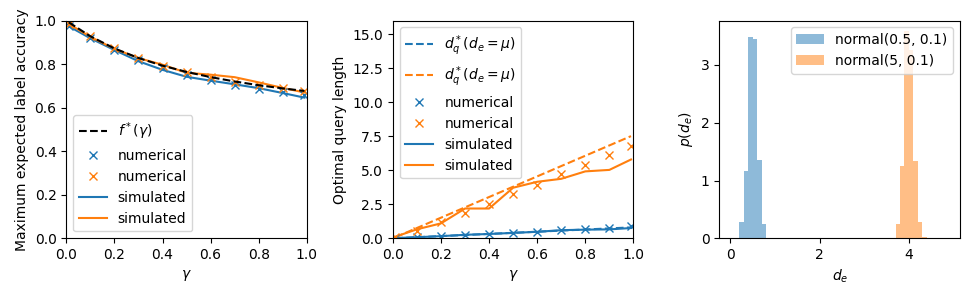}
    \caption{We validate the theory for stochastic event lengths drawn from two normal distributions with different means, but the same variance. We show the expected label accuracy (left panel), the optimal query length (middle panel), and the considered event length distributions (right panel).}
    \label{fig:normal_mean}
\end{figure}

\begin{figure}
    \centering
    \includegraphics[width=\textwidth]{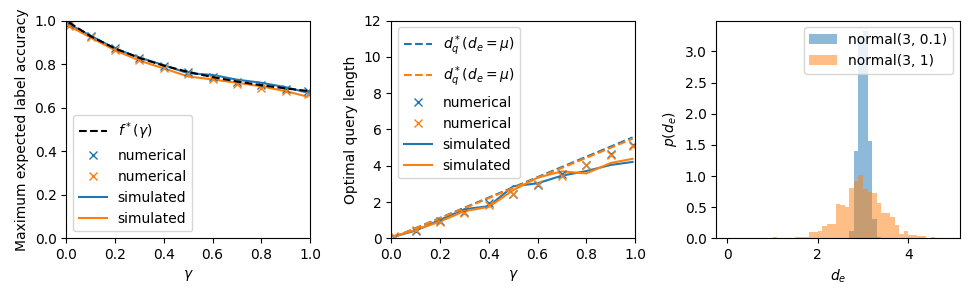}
    \caption{We validate the theory for stochastic event lengths drawn from two normal distributions with different variance, but the same mean. We show the expected label accuracy (left panel), the optimal query length (middle panel), and the considered event length distributions (right panel).}
    \label{fig:normal_variance}
\end{figure}

In Figure~\ref{fig:normal_mean} and Figure~\ref{fig:normal_variance} we see that the mean and variance of the normal distribution have a small (if any) effect on the maximum expected label accuracy, but the mean does affect which query segment length that maximizes the expected label accuracy. We also see that $d_q^*$ follows the simulated and numerical optimal query length well for all considered normal distributions, when $d_e$ is set to the average event length ($\mu$) for the considered event length distribution. The average event length can be used as a heuristic value if we only know the average and not the true distribution to integrate over.

\begin{figure}
    \centering
    \includegraphics[width=\textwidth]{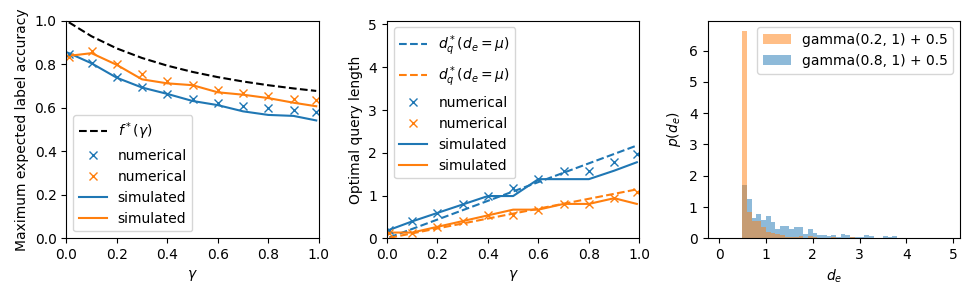}
    \caption{We validate the theory for stochastic event lengths drawn from two gamma distributions with different shape parameters, but the same scale parameter. We show the expected label accuracy (left panel), the optimal query length (middle panel), and the considered event length distributions (right panel).}
    \label{fig:gamma}
\end{figure}

In Figure~\ref{fig:gamma} we can see that a gamma distribution does affect the maximum expected label accuracy, and that simply setting $d_e$ to the average event length of the distribution leads to underestimating the optimal query length. Since it is not possible to optimize for both short and long events at the same time using FIX weak labeling, this type of distribution is quite challenging.

\begin{figure}
    \centering
    \includegraphics[width=\textwidth]{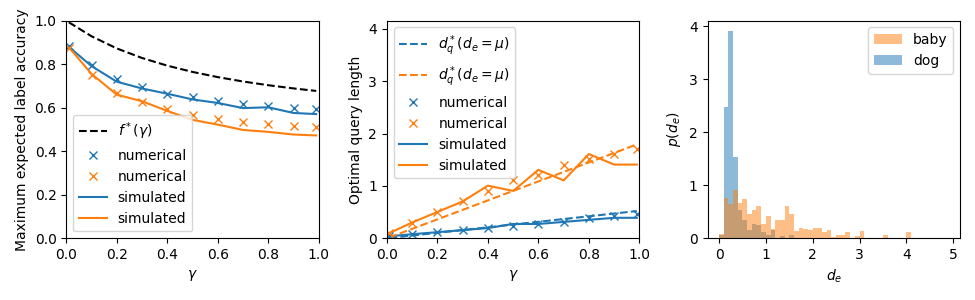}
    \caption{Barking dog and crying baby event length distributions from the NIGENS dataset~\citep{Trowitzsch2019}. These annotations have been made with a strong guarantee for high quality onsets and offsets.}
    \label{fig:dog_and_baby}
\end{figure}

In Figure~\ref{fig:dog_and_baby} we validate the theory against a real sample of event lengths from either baby cries or dog barks. Numerical integration between the derived expression and the histogram predicts the simulations well.

\subsubsection{Multiple Events with Stochastic Length}
\label{sec:multi_events}

In these simulations we allow multiple events to occur in the same recording ($M>1$). In Figure~\ref{fig:uniform_30} we show the results of sampling $30$ events of length $d_e=1$ for each audio recording. This does have an effect on the expected maximum label accuracy and the corresponding query length, though the impact is relatively modest. In Figure~\ref{fig:uniform_50} we show the results of sampling $50$ events of length $d_e=1$ for each audio recording. This is an extreme case, where the event density of the recording is very high. 
These results demonstrate that even under high event densities, the simulated maximum accuracy follows the theoretical predictions closely. This confirms that the single-event theory provides a robust upper bound even when assumptions about event sparsity are strongly violated.

\begin{figure}
    \centering
    \includegraphics[width=\textwidth]{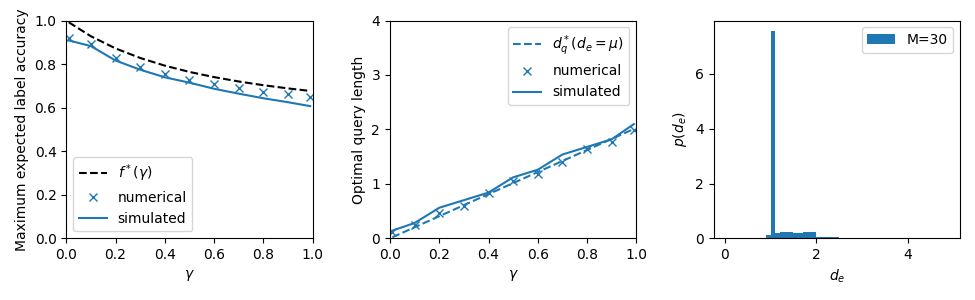}
    \caption{We validate the theory for multiple events of length $d_e=1$. We show the expected label accuracy (left panel), the optimal query length (middle panel), and the considered event length distributions (right panel). Note that presence events longer than $1$ can occur if two or more events overlap. We sample $30$ events with event length $d_e=1$ occur at random for each audio recording in this simulation.}
    \label{fig:uniform_30}
\end{figure}

\begin{figure}
    \centering
    \includegraphics[width=\textwidth]{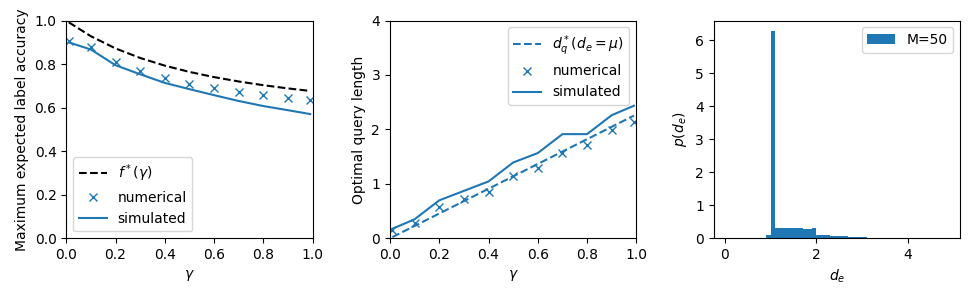}
    \caption{We validate the theory for multiple events of length $d_e=1$. We show the expected label accuracy (left panel), the optimal query length (middle panel), and the considered event length distributions (right panel). Note that presence events longer than $1$ can occur if two or more events overlap. We sample $50$ events with event length $d_e=1$ occur at random for each audio recording in this simulation.}
    \label{fig:uniform_50}
\end{figure}


\subsection{Annotation Cost for Maximum Expected Label Accuracy given Overlap}

Achieving maximum expected label accuracy comes at a cost, and understanding this cost trade-off is essential for practical annotation efforts. The cost model we employ accounts for both the time spent listening to audio and the effort required to label presence or absence events. 

\subsubsection{Formalizing the Cost Model}

The derived theory for the optimal query length allows us to analyze the cost of achieving maximum expected label accuracy under different annotator models for FIX weak labeling. We assume that the whole audio recording of length $T$ is listened to. The key difference in cost between the FIX and ORC weak labeling method is the number of segments ($B$) that need to be given a presence or absence label. We formalize a cost model as:
\begin{equation}
\label{eq:cost}
    C(T, B) = (1-r)T + rB,
\end{equation}
where $1-r$ represents the cost of listening to one second of audio (cost per second), and $r$ represents the cost of answering a query (cost per query). The term $(1-r)T$ therefore represents the cost of listening to $T$ seconds of audio, and the term $rB$ the cost of assigning $B$ presence or absence labels. Using this cost model, we calculate the cost of annotating an audio recording of length $T$ with $M$ sound events of length $d_e=1$ using either FIX or ORC weak labeling. For FIX, the number of queries that maximize expected label accuracy is given by $B^*_{\text{FIX}} = T/d_q^*$ (see Theorem~\ref{thm:fix_number_of_queries}). For ORC, achieving an expected label accuracy of $1$ requires at least $B^*_{\text{ORC}} = 2M+1$ queries.

In practice, we do not know the number of events $M$. To explore potential overestimation of $M$ when, for example, using a weak labeling process that tries to mimic ORC weak labeling, we model $B_{\text{ORC}}$ as a multiple of the necessary number of queries: $B_{\text{ORC}} = sB^*_{\text{ORC}}$, where $s \in \{1, 2, 4, 8\}$ represents the degree of overestimation. This approach captures scenarios where the number of events are either precisely estimated ($s=1$) or significantly overestimated ($s=8$) during the annotation process. In practice, $B_{\text{ORC}}$ could be set based on a bound on $M$. For example, by estimating a maximum expected number of sound events in a recording, $M_{\max}$, based on knowledge of typical event density, or characteristics of the audio recording. We assume that overestimation by more than a factor of $8$ is unlikely. The relative cost between FIX and ORC weak labeling can then be computed as:
\begin{equation}
\label{eq:cost_ratio}
    \frac{C_{\text{FIX}}}{C_{\text{ORC}}} = \frac{C(T, B^*_{\text{FIX}})}{C(T, B_{\text{ORC}})},
\end{equation}
where a ratio larger than $1$ indicates that FIX is more costly than ORC, and a ratio smaller than $1$ indicates that FIX is less costly than ORC.

\begin{figure}
    \centering
    \includegraphics[width=0.49\textwidth]{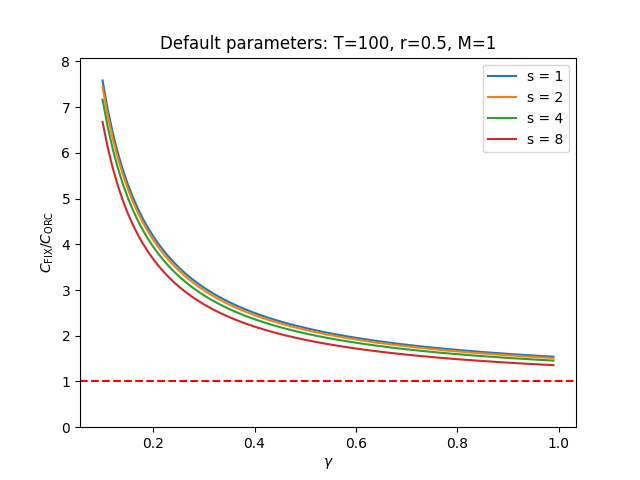}
    \includegraphics[width=0.49\textwidth]{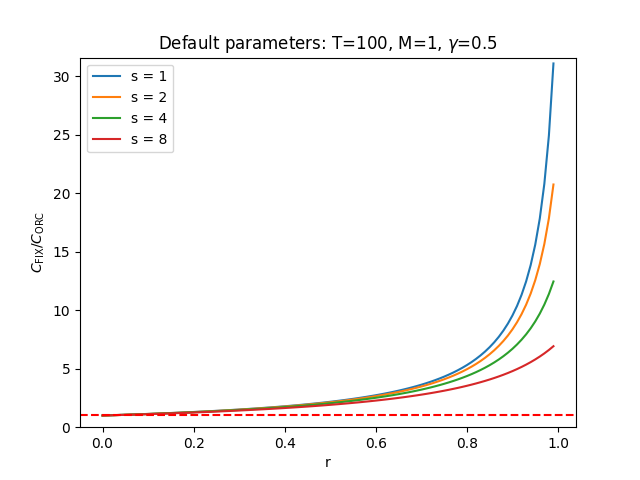}
    \caption{The relative cost of FIX and ORC for varying annotator criteria $\gamma$ (left), and cost ratios $r$ (right). The default parameters are: $T=100$, $r=0.5$, $M=1$ and $\gamma=0.5$. We simulate overestimating the number of needed queries $B_{\text{ORC}} = s(2M+1)$ by a factor of $s$ for $s \in \{1, 2, 4, 8\}$ to see how this affects the relative cost. The cost of FIX is greater than the cost of ORC above the dashed red line where the cost ratio is $1$.}
    \label{fig:cost_1}
\end{figure}

\subsubsection{Effect of annotator criteria ($\gamma$) and cost ratio ($r$).} Figure~\ref{fig:cost_1} (left) shows the relative cost for varying annotator criteria $\gamma \in [0.1, 1]$. As $\gamma \rightarrow 0.1$, the cost of FIX increases sharply, reflecting the need for an infinitely large number of queries to achieve an expected label accuracy of $1$. In practice, achieving perfect accuracy with FIX is infeasible due to the associated cost. For higher $\gamma$, the cost of FIX becomes more comparable to ORC. However, combining this with Theorem~\ref{thm:max_iou} reveals that FIX can either match ORC in cost but with lower expected accuracy or achieve similar accuracy at a much higher cost.

The right panel of Figure~\ref{fig:cost_1} examines the impact of the cost ratio $r$. Across all tested values, ORC remains less costly than FIX in the default setting ($T=100$, $r=0.5$, $\gamma=0.5$, $M=1$). This confirms that the relative cost advantage of ORC is robust to changes in $r$.

\begin{figure}
    \centering
    \includegraphics[width=0.49\textwidth]{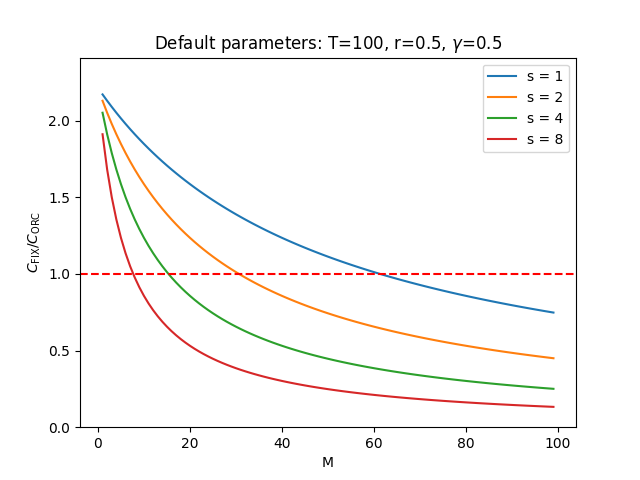}
    \includegraphics[width=0.49\textwidth]{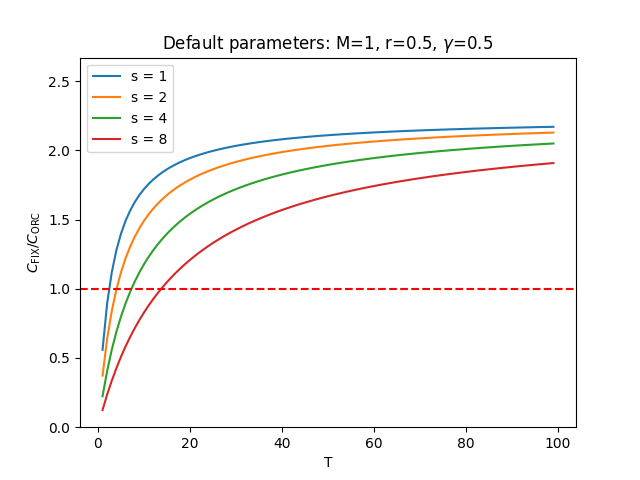}
    \caption{The relative cost of FIX and ORC for varying number of sound events $M$ (left) and recording lengths $T$ (right). The default parameters are: $T=100$, $r=0.5$, $M=1$ and $\gamma=0.5$. We simulate overestimating the number of needed queries $B_{\text{ORC}} = s(2M+1)$ by a factor of $s$ for $s \in \{1, 2, 4, 8\}$ to see how this affects the relative cost. The cost of FIX is greater than the cost of ORC above the dashed red line where the cost ratio is $1$.}
    \label{fig:cost_2}
\end{figure}

\subsubsection{Effect of number of events ($M$) and recording length ($T$).} Figure~\ref{fig:cost_2} explores the impact of $M$ and $T$ on the relative cost. In the left panel, we see that for $s=1$, ORC is less costly than FIX when the number of events is below $60$. However, as $s$ increases to $8$, FIX becomes less costly when at most $10$ events are present. These results indicate that the relative cost depends heavily on the density of sound events in the recording and the estimated annotation budget for ORC. 

In the right panel, varying $T$ shows a similar trend. For shorter recordings (high event density), ORC loses its cost advantage. However, it’s important to note that the maximum achievable expected label accuracy with FIX under default settings ($\gamma=0.5$) is $f^*(0.5) \approx 0.76$, whereas ORC achieves $1.0$. In such cases, the additional cost of ORC may be justified by the significantly higher label quality.

While these results indicate that the relative cost depends on the sound event density, we should remember that we are considering weak labeling of presence events. This implies that all $M$ events in this analysis are treated as non-overlapping, as the annotation task does not consider temporal overlaps for this analysis. The scenario of $M > 60$ non-overlapping events of length $1$ in a recording of length $T=100$ is therefore unlikely in practice. Similarly, estimating $10$ events as $80$ (modeled by $s=8$) for an audio recording of length $T=100$ represents a substantial overestimation and seems improbable given the capabilities of modern sound event detection tools. 
\section{Related Work}

This work introduces a framework for characterizing segmentation label noise in FIX weak labeling, a largely unexplored area. Below, we review studies addressing noisy labels and approaches to mitigate their effects, with a focus on weak labeling in audio and related domains.

\subsection{Understanding Noisy Labels}

Noisy labels are a partial description of the target model, influencing its performance. Early work by \citet{Liang2009} introduced the concept of \textit{measurements} for conditional exponential families, encompassing labels and constraints for model learning with minimal human input—a goal shared by this work.


In deep learning, the ability of models to overfit noisy labels has prompted studies into the relationship between noise rate and generalization~\citep{Zhang2021, Chen2019}. Research on class label noise often assumes a noise transition matrix~\citep{Li2021} but rarely considers spatially correlated errors like those arising in segmentation tasks~\citep{Yao2023}. For audio, \citet{Hershey2021} demonstrated that training on strongly labeled data yields better results than weakly labeled data, highlighting the need for precise labels, particularly in evaluation. In multi-modal tasks, such as audio-visual video parsing, a key challenge is \textit{modality-specific label noise}, where a video-level tag may apply to the audio stream but not the visual, or vice versa \citep{Cheng2022, Zhou2024}. Our work focuses specifically on characterizing the \textit{segment label noise} that arises from the temporal misalignment between fixed-length query segments and true event boundaries.

\subsection{Mitigating Noisy Labels}

Several strategies address noisy labels, including regularization techniques like dropout~\citep{Srivastava2014}, data augmentation~\citep{Shorten2019}, and specialized loss functions~\citep{Fonseca2019_agnostic}. For weakly labeled audio, \citet{Dinkel2022} proposed a pseudo-labeling approach, iteratively refining labels to improve training performance. Despite these advances, most methods focus on training labels and offer limited insights into noisy evaluation labels, underscoring the need for frameworks that quantify label noise, such as the one proposed in this work.

\subsection{Strong vs. Weak Labeling}

Strong labeling, where the annotator provides the event boundaries and the class label, while often precise, is resource-intensive and subject to annotator variability~\citep{Mesaros2017}. In bioacoustics, experts use spectrograms for efficient annotation~\citep{Cartwright2017}, but the reliance on specialists limits scalability. Weak labeling, by contrast, simplifies the annotation task, which is especially important for crowd-sourced annotations, enabling broader data collection~\citep{Martin-Morato2023a}. However, segment label noise, especially at event boundaries, remains a significant challenge.

Large-scale audio datasets employing FIX weak labeling are summarized in Table~\ref{tab:fix_datasets}. Two common annotation tasks are single-pass multi-label and multi-pass binary-label annotation~\citep{Cartwright2019}. Single-pass multi-label annotation asks annotators to recognize the presence of multiple event classes during a single pass through the data. In contrast, multi-pass binary-label annotation asks annotators to detect the presence or absence of a single event class at a time through multiple passes through the data.

\citet{Cartwright2019} studied the trade-offs between these tasks and found that binary labeling is preferable when high recall is required. For example, AudioSet~\citep{Gemmeke2017} employs single-pass multi-label annotation with non-overlapping $10$-second segments, which limits temporal resolution. Conversely, MAESTRO Real~\citep{Martin-Morato2023a} uses overlapping $10$-second segments with a $9$-second overlap, increasing the accuracy of the derived labels.

\begin{table}[t]
    \centering
    \begin{tabular}{l c c}
         Dataset                  & Task                & Fixed Length \\
         \hline
         CHIME~\citep{Foster2015}     & Single-pass multi-label & $4$ seconds \\
         AudioSet~\citep{Gemmeke2017} & Single-pass multi-label & $10$ seconds \\
         MAESTRO Real~\citep{Martin-Morato2023a} & Single-pass multi-label & $10$ seconds \\
         OpenMIC-2018~\citep{Humphrey2018} & Multi-pass binary-label & $10$ seconds \\
    \end{tabular}
    \caption{Large-scale audio datasets using variations of FIX weak labeling.}
    \label{tab:fix_datasets}
\end{table}

The choice of segment length and overlap significantly impacts the utility of weak labeling. For example, while overlapping segments increase label accuracy~\citep{Martin-Morato2023a}, they still fail to distinguish events occurring close in time. Current work aims to better understand the effect of different choices of the segment length for FIX weak labeling.

\subsection{Contributions of This Work}

Existing research focuses predominantly on class label noise or assumes noise independence. This work extends these efforts by characterizing segment label noise specific to FIX weak labeling, providing a foundation for improving both training and evaluation processes in weakly labeled datasets.

\section{Discussion}
\label{sec:discussion}

FIX labeling has been employed in many works, with varying degrees of complexity. Theorem~\ref{thm:fix_optimal_query_length} provides a useful rule of thumb for selecting the best segmentation length for a given event length, and Eq.~\ref{eq:expected_query_iou_distribution} provides a way to use this theorem to analyze stochastic event length distributions. Our results suggest that, in most cases, knowing the average event length provides a good estimate, but understanding the (approximate) distribution of event lengths improves the analysis.

\subsection*{Implications for Practical Annotation}

Our theoretical analysis positions FIX weak labeling as a baseline strategy: it is simple and scalable but inherently limited by segment label noise. By quantifying this limitation, our work provides the necessary theoretical justification for moving towards adaptive methods that aim to approximate the oracle process.


The analysis highlights the trade-offs in label accuracy and annotation cost between FIX and ORC weak labeling. While FIX can be less costly under specific conditions (e.g., high event density), these conditions are unlikely to occur in real-world annotation tasks. Furthermore, even in cases where FIX is less costly, its significantly lower label accuracy ($f^*(0.5) \approx 0.76$ vs. $1.0$ for ORC) can negate its cost advantage. This gap represents the "accuracy cost of weakness" that is inherent to any non-adaptive weak labeling strategy. This gap only increases when $d_q$ is chosen sub-optimally, which is often the case in practice due to budget considerations. Given the rarity of extreme event densities and the importance of high-quality labels, ORC is likely the better theoretical choice for most annotation tasks.

However, ORC weak labeling is not available in practice since it uses the true event boundaries. This provides a clear theoretical justification for developing adaptive weak labeling methods, which aim to approximate the ORC process. For instance, methods that use active learning or change-point detection to define query boundaries \citep{Martinsson2024, Kim2023} are practical attempts to bridge the gap between FIX and ORC. \citet{Martinsson2024} empirically evaluates an adaptive change-point detection method (A-CPD) and compares that to FIX weak labeling and ORC weak labeling, showing the benefit of an adaptive weak labeling method for annotation of sound events. Our work provides the tools to quantify the maximum potential accuracy gain for such methods over a simple FIX baseline, offering a principled way to evaluate the trade-off between the complexity of an adaptive strategy and its achievable accuracy. Future research should focus on mitigating the potential biases when modeling ORC weak labeling (e.g., annotation errors, overfitting to sparse events) while retaining its theoretical advantages.

\subsection*{Implications for Model Evaluation}

Despite the extensive focus on noisy training labels, evaluation labels are often implicitly assumed to be perfect. As emphasized in the introduction, inaccurate evaluation labels present a significant challenge. When noise is present in both training and evaluation data, we risk selecting models that merely replicate the evaluation noise, potentially overlooking those with superior generalization abilities. This echoes the central concern highlighted by \citet{Görnitz2014}. We can use Theorem~\ref{thm:max_iou} to understand the properties of the best performing model when the evaluation data contains FIX weak labels. For example, for $\gamma=0.5$ the annotations will at most have an expected label accuracy of $f^*(0.5) \approx 0.76$. The ``best'' performing model will therefore be a model that mimics this specific noise profile. Our theory thus provides a better understanding of the target that models are optimizing for when evaluated on weakly labeled data.

This is also relevant for standard sound event detection (SED) evaluation metrics, such as the segment-based F$_1$ score~\citep{Mesaros2016}, which divide audio into fixed-length segments. When using ground truth labels for evaluation, we effectively have an annotator with $\gamma \rightarrow 0$. The expected label accuracy then becomes $f(d_q) = d_e / (d_e + d_q)$, where $d_q$ is the segment length. This formula shows that a small $d_q$ minimizes segment label noise, but choosing a very small segment length negates the desired effect of mitigating temporal imprecision in the ground truth and also increases computational cost. The theory presented here can help inform such trade-offs.

\subsection*{Theoretical Properties and Validation}

The expression for expected label accuracy derived in this paper applies to the simplest scenario, where only a single event with deterministic length is present. In all of our results, we observe that $f^*(\gamma)$ is greater than or equal to the expected and average label accuracy that FIX weak labeling achieve for more complex distributions. This suggests that $f^*(\gamma)$ can be considered an upper bound for a given annotation process. However, a formal proof showing that adding more events or introducing event length variability leads to a harder distribution to annotate is beyond the scope of this paper.

To connect this theoretical framework to a real-world setting, we conducted an empirical analysis using the weakly and strongly labeled versions of the AudioSet dataset, as detailed in Appendix~\ref{app:audioset_analysis}. By treating the 10-second weak labels of AudioSet as the output of a FIX process, we calculated the empirical label accuracy against the corresponding strong labels. Our theoretical model, when applied to the event length distribution of the "Animal" class, accurately predicted this empirical accuracy for a presence criterion of $\gamma \approx 0.26$. This serves as an empirical validation of our framework on a large-scale dataset. 

\subsection*{Generalization and Future Directions}

While our work is grounded in audio event detection, the core principles are broadly generalizable because the mathematics depends only on two fundamental quantities: the event duration $d_e$, and the query segment length $d_q$. These quantities can be directly mapped to other domains, e.g., video action spotting, electrocardiography, seismology, or high-frequency trading. Consequently, our core results (Theorems~\ref{thm:expected_iou}-\ref{thm:max_iou}) transfer directly to these domains, provided three conditions hold: (i) the events uniformly distributed locally in time, making the uniform relative offset a reasonable model (as empirically verified in Appendix~\ref{app:uniform_assumption}), (ii) the annotation process relies on observing a minimum fraction $\gamma$ of an event, and (iii) the events are sufficiently sparse. This framework can also be extended to higher dimensions, such as analyzing the weak labeling of rectangles in images or cubes in point clouds.

Finally, if the same presence criterion $\gamma$ is applicable for all event classes, Theorem~\ref{thm:expected_iou} applies to the joint event length distribution. However, real-world presence criteria for different event classes may vary, requiring more complex models. Future empirical studies on annotator behavior could help refine this model and improve its practical applicability.

\section{Conclusions}
\label{sec:conclusion}

This study introduces a novel theoretical framework for understanding the trade-offs between label accuracy and annotation cost in weak labeling methods, particularly focusing on sound event detection where weak labeling is often employed to reduce annotation costs. We specifically compared fixed-length (FIX) and oracle (ORC) approaches.

We have demonstrated that FIX weak labeling, while cost-effective in specific scenarios, is inherently limited by segment label noise. The expressions we derived theoretically provide actionable insights into optimizing segment length for maximizing expected label accuracy under FIX. However, these results also underscore the fundamental trade-offs: shorter segments improve alignment with event boundaries but significantly increase annotation cost, while longer segments reduce cost at the expense of accuracy. In addition, how short these segments can be chosen depends on the ability of the annotator to detect presence of fractions of the events. In contrast, ORC labeling achieves perfect accuracy but can incur higher costs if events are very dense and the number of events are overestimated.

Our findings have several practical implications:
\begin{itemize}
    \item \textbf{Annotation Strategy:} FIX weak labeling remains a robust, scalable choice for many practical applications. However, when high label accuracy is essential, ORC weak labeling—or adaptive methods approximating it—should be prioritized.
    \item \textbf{Adaptive Techniques:} Theoretical justification for adaptive weak labeling methods, e.g., methods based on active learning or iterative refinement, that mimic ORC weak labeling, which suggests promising avenues for improving annotation efficiency without compromising accuracy.
    \item \textbf{Evaluation Criteria:} Our analysis highlights the potential biases introduced by segment-level label noise in evaluating sound event detection models. Therefore, carefully aligning evaluation criteria with the intended model properties is critical.
\end{itemize}

Future research should address several limitations and extensions identified in our study. Developing practical approaches that reliably mimic ORC weak labeling by estimating the query segments without introducing a lot of unwanted bias in the labels remains an open challenge. Additionally, extending this framework to multi-dimensional data and multiple presence classes could broaden its applicability to other domains, such as medical imaging and point clouds.

In conclusion, the insights presented in this work offer a foundation for optimizing weak labeling processes, balancing cost and accuracy to meet the needs of diverse machine learning applications. By refining annotation strategies and leveraging adaptive methods, researchers can enhance the quality of labeled datasets. This, in turn, will drive advancements in supervised learning across domains, building upon the foundational understanding presented in this work.

\subsubsection*{Acknowledgments}

The authors would like to express their gratitude to the following individuals and organizations for their contributions and support:

\begin{itemize}
    \item \textbf{Annamaria Mesaros}: Provided insightful questions and feedback on an early draft of this work.
    \item \textbf{Magnus Oskarsson}: Contributed a proof sketch for Theorem~\ref{thm:fix_optimal_query_length} and gave feedback on an early draft.
    \item \textbf{Edvin Listo Zec}: Assisted with the proof of Theorem~\ref{thm:fix_optimal_query_length}.
    \item This work was supported by the Swedish Foundation for Strategic Research (SSF; FID20-0028) and Sweden’s Innovation Agency (2023-01486).
\end{itemize}

\bibliography{main}
\bibliographystyle{tmlr}

\appendix
\section{Appendix}
\label{app:appendix}

We do not include all simplifications of expressions in the proofs, but we do provide the code for a symbolic mathematics solver (SymPy) at GitHub\footnote{\url{https://github.com/johnmartinsson/the-accuracy-cost-of-weakness}}, where all results can be verified. The notebook named ``symbolic\_verification\_of\_analysis.ipynb'' can be used to verify the analysis.


\subsection{Proof of Theorem~\ref{thm:expected_iou}}
\label{app:proof_of_theorem}



\label{app:thm1}

We will derive an expression for the expected query segment accuracy given overlap with a single event in terms of $d_e$, $d_q$, and $\gamma$, under all possible assumptions which will prove Theorem~\ref{thm:expected_iou}. 

\begin{proof}
We need to consider two main assumptions. The first assumption is that the presence criterion for the annotator can be fulfilled, that is, $d_q \geq \gamma d_e$, and the second assumption is that the annotator presence criterion can not be fulfilled, that is, $d_q < \gamma d_e$. This happens if the query segment length is so short that it can never cover a large enough fraction of the event of interest to make presence detection feasible.

\begin{assumption}
    The annotator presence criterion can be fulfilled ($d_q \geq \gamma d_e$).
\end{assumption}
Under this assumption there are two possible cases for the relation between $d_q$ and $d_e$, either the event length is longer or equal to the query segment length, $d_e \geq d_q$ (case i), or the event length is shorter than the query segment length, $d_e < d_q$ (case ii). In Figure~\ref{fig:query_segment_accuracy}, we plot the query segment accuracy, $F(e_t, q, \gamma)$, for $t \in [0, d_e+d_q]$ for case (i) on the left, and case (ii) on the right. 
We describe in more detail in Appendix~\ref{app:details_on_expressions} how the query segment accuracy behaves as a function of different amounts of overlap between the query segment and the event. Briefly, what we see in Figure~\ref{fig:query_segment_accuracy} is that initially there is arbitrarily little overlap ($t_0^{(i)}$ and $t_0^{(ii)}$), an absence label is given to the query segment and the accuracy is therefore $1$. Then the accuracy decrease linearly with the amount of overlap until the presence criterion is fulfilled and a presence label is given ($t_1^{(i)}$ and $t_1^{(ii)}$). After that, the accuracy linearly increase with the amount of overlap between the event and query segment until we reach a ceiling for the accuracy when either the whole query segment is inside the event ($t_2^{(i)}$) or the query segment covers the whole event ($t_2^{(ii)}$). Finally, the overlap between the query segment and the event starts to decrease again ($t_3^{(i)}$ and $t_3^{(ii)})$, and everything is symmetrical.

We continue by dropping the case superscripts show in the figure for $A_1, \dots, A_3$ and $t_0, \dots, t_5$, and only provide the full proof for case (i), but the proof for case (ii) is similar. In both cases the area $A$ in Eq.~\ref{eq:normalized_area} can be divided into five distinct parts:
\begin{equation}
\label{eq:area}
    A = 2A_1 + 2A_2 + A_3,
\end{equation}
where $A_1$ and $A_2$ are counted twice due to symmetry.

\begin{figure}[H]
    \centering
    \includegraphics[width=0.8\textwidth]{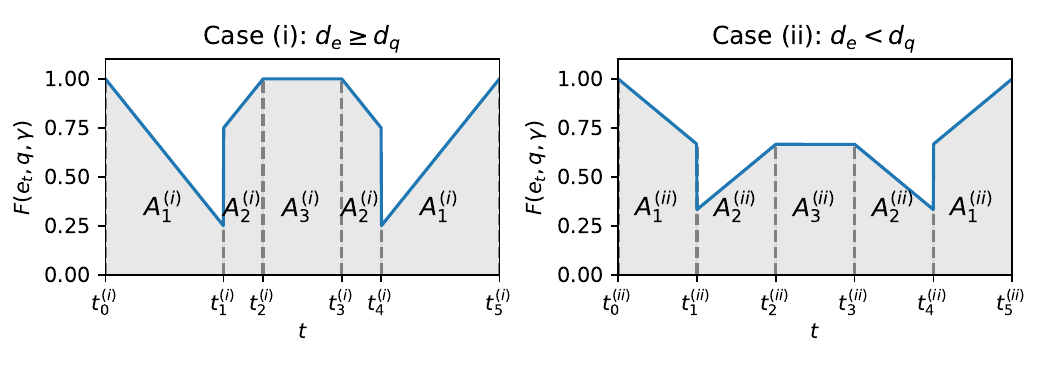}
    \caption{Assuming $d_q \geq d_e \gamma$, we plot the query segment accuracy, $F(e_t, q, \gamma)$, for $t\in [0, d_e + d_q]$, where $t_0 = 0$ and $t_5 = d_e + d_q$. Case (i) where $d_e \geq d_q$ is shown in the left panel, and case (ii) where $d_e < d_q$ is shown in the right panel.}
    \label{fig:query_segment_accuracy}
\end{figure}

The variables $t_0, t_1, \dots, t_5$, represent the different states $t$ of overlap where the discontinuities of $F(e_t, q, \gamma)$ occur, and using these we can express the areas as the following integrals:
\begin{equation}
    A_1 = \int_{t_0}^{t_1} F(e_t, q, \gamma)\mathrm{d}t = \int_{t_4}^{t_5} F(e_t, q, \gamma)\mathrm{d}t,
\end{equation}
and
\begin{equation}
    A_2 = \int_{t_1}^{t_2} F(e_t, q, \gamma)\mathrm{d}t = \int_{t_3}^{t_4} F(e_t, q, \gamma)\mathrm{d}t,
\end{equation}
due to symmetry, and
\begin{equation}
    A_3 = \int_{t_2}^{t_3} F(e_q, q, \gamma)\mathrm{d}t.
\end{equation}
We use that the query segment accuracy $F(e_t, q, \gamma)$ is linear in each interval, which means that the areas can be expressed as
\begin{equation}
\label{eq:a_1}
    A_1 = \frac{F(e_{t_0}, q, \gamma) + F(e_{t_{1^-}}, q, \gamma)}{2} (t_1 - t_0),
\end{equation}
\begin{equation}
\label{eq:a_2}
    A_2 = \frac{F(e_{t_{1^+}}, q, \gamma) + F(e_{t_2}, q, \gamma)}{2} (t_2 - t_1),
\end{equation}
and
\begin{equation}
\label{eq:a_3}
    A_3 = \frac{F(e_{t_2}, q, \gamma) + F(e_{t_3}, q, \gamma)}{2} (t_3 - t_2),
\end{equation}
where $t^-$ indicate that we approach the discontinuity at $t$ from below and $t^+$ from above. We now only need to express $t_0, \dots, t_3$ and $F(e_{t_0}, q, \gamma), \dots, F(e_{t_3}, q, \gamma)$ in terms of $d_e$, $d_q$ and $\gamma$ to conclude the proof. For brevity, these have been provided in Table~\ref{tab:expressions}. See section~\ref{app:details_on_expressions} for details on how to express these in terms of $d_q$, $d_e$ and $\gamma$.

\begin{table}[]
    \centering
    \begin{tabular}{l l | l l}
         \multicolumn{2}{c|}{Case (i), $d_e \geq d_q$} & \multicolumn{2}{c}{Case (ii), $d_e < d_q$} \\
         \hline
         $t^{(i)}_0 = 0$          & $F(e^{(i)}_{t_0}, q, \gamma) = 1$   & $t^{(ii)}_0 = 0$          & $F(e^{(ii)}_{t_0}, q, \gamma) = 1$ \\
         $t^{(i)}_1 = \gamma d_e$ & $F(e^{(i)}_{t_1^-}, q, \gamma) = \frac{d_q - \gamma d_e}{d_q}$ & $t^{(ii)}_1 = \gamma d_e$ & $F(e^{(ii)}_{t_1^-}, q, \gamma) = \frac{d_q - \gamma d_e}{d_q}$ \\
         $t^{(i)}_2 = d_q$        & $F(e^{(i)}_{t_1^+}, q, \gamma) = \frac{\gamma d_e}{d_q}$ & $t^{(ii)}_2 = d_e$        & $F(e^{(ii)}_{t_1^+}, q, \gamma) = \frac{\gamma d_e}{d_q}$ \\
         $t^{(i)}_3 = d_e$        & $F(e^{(i)}_{t_2}, q, \gamma) = 1$   & $t^{(ii)}_3 = d_q$        & $F(e^{(ii)}_{t_2}, q, \gamma) = \frac{d_e}{d_q}$ \\
                                  & $F(e^{(i)}_{t_3}, q, \gamma) = 1$   &                           & $F(e^{(ii)}_{t_3}, q, \gamma) = \frac{d_e}{d_q}$ \\
    \end{tabular}
    \caption{A summary of the derived expressions for $t_0, \dots, t_3$ and $F(e_{t_0}, q, \gamma), \dots, F(e_{t_3}, q, \gamma)$ for each case. $F(e_{t_1^-}, q, \gamma)$ and $F(e_{t_1^+}, q, \gamma)$ denotes the limits when approaching $t_1$ from below and above respectively.}
    \label{tab:expressions}
\end{table}

We provide the steps for case (i), and leave the derivation for case (ii) to the reader. We substitute the expressions for case (i), provided in Table~\ref{tab:expressions}, into equations Eq.~\ref{eq:a_1}-\ref{eq:a_3}, and the resulting expressions for the areas $A^{(i)}_1$, $A^{(i)}_2$, and $A^{(i)}_3$ into Eq.~\ref{eq:area} which give



\begin{align*}
A^{(i)} &= \frac{2}{2}(1+\frac{d_q-\gamma d_e}{d_q})\gamma d_e
+ \frac{2}{2}(1 + \frac{\gamma d_e}{d_q})(d_q - \gamma d_e)
+ (d_e - d_q) \\
&= (2d_q - \gamma d_e)\frac{\gamma d_e}{d_q} + (d_q + \gamma d_e)(d_q - \gamma d_e)\frac{1}{d_q} + (d_e - d_q) \\
&= \frac{1}{d_q}(2\gamma d_q d_e - \gamma^2 d_e^2 + \cancel{d_q^2} - \gamma^2 d_e^2 + d_e d_q - \cancel{d_q^2}) \\
&= \frac{1}{d_q}(2\gamma d_q d_e - 2\gamma^2 d_e^2 + d_e d_q) \\
&= \frac{d_e}{d_q}(2\gamma d_q - 2\gamma^2 d_e + d_q).
\end{align*}
Finally, by substituting $A$ for $A^{(i)}$ in Eq.~\ref{eq:normalized_area} we arrive at
\begin{equation}
    \frac{A^{(i)}}{d_e + d_q} = \frac{d_e(2\gamma d_q - 2\gamma^2 d_e + d_q)}{d_q(d_e + d_q)}
\end{equation}

which shows that Eq.~\ref{eq:expected_iou} holds for case (i) under the assumption that $d_q \geq \gamma d_e$. Similarly, this also holds for case (ii).


\begin{assumption}
    The annotator presence criterion can not be fulfilled ($d_q < \gamma d_e$).
\end{assumption}

When the presence criterion can not be fulfilled we never get any presence labels, this means that the fraction of the query segment that overlaps with an event is always incorrectly given an absence label. When the query segment completely overlaps with an event the query segment accuracy will be $0$ (seen between $t_1$ and $t_2$ in Figure~\ref{fig:query_segment_accuracy_2}).

\begin{figure}[H]
    \centering
    \includegraphics[width=0.4\textwidth]{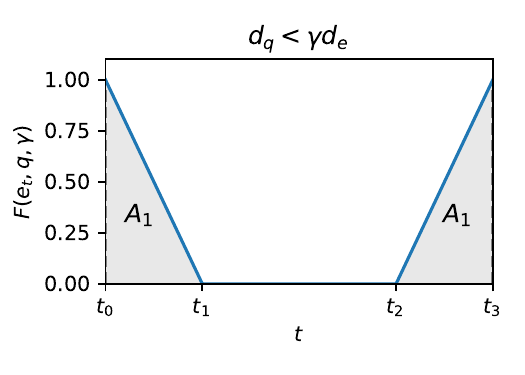}
    \caption{Assuming that $d_q < \gamma d_e$, we plot the query segment accuracy, $F(e_t, q, \gamma)$, for $t\in [0, d_e + d_q]$, where $t_0 = 0$ and $t_3 = d_e + d_q$.}
    \label{fig:query_segment_accuracy_2}
\end{figure}

The area $A_1$ is counted twice due to symmetry. The discontinuity at $t_1$ occurs for the smallest $t\in [0, d_e + d_q]$ for which $F(e_t, q, \gamma)=0$, which happens for the smallest $t$ for which the whole query segment overlaps with the event $|e \cap q| = d_q$ at $t=d_q$. We therefore have that $t_1 - t_0 = t_3 - t_2 = d_q$.

When there is no overlap between the query segment and the event giving a presence label is always correct, thus $F(e_{t_0}, q, \gamma) = 1$. However, giving an absence label to a query segment that completely overlaps with an event gives the query segment accuracy $0$, thus $F(e_{t_1}, q, \gamma) = 0$. The total area under the curve is therefore $2A_1 = d_q$ and by normalizing with $t_3-t_0 = d_e + d_q$, we get $d_q / (d_e + d_q)$, which proves the $d_q < \gamma d_e$ case of Eq.~\ref{eq:expected_iou}, and concludes the proof.

\end{proof}

\subsubsection{Details on the expressions in Table~\ref{tab:expressions}}
\label{app:details_on_expressions}

This section provides a detailed explanation of the values presented in Table~\ref{tab:expressions}. For each case (i) and (ii), we will define the specific time points  $t_0, t_1, t_2, t_3$ where the query segment accuracy function $F(e_t, q, \gamma)$ changes, and explain the corresponding value of the function at these points based on the overlap between the event $e_t$ and the query segment $q$. The states $t_4$ and $t_5$ are analogous to $t_1$ and $t_0$, respectively, and therefore not illustrated. The difference is that the amount of overlap between the query segment and event decreases (instead of increases) when approaching these states.

\pagebreak

\textbf{Case (i): $d_e \geq d_q$}

\begin{figure}[H]
    \centering
    \includegraphics[width=0.8\textwidth]{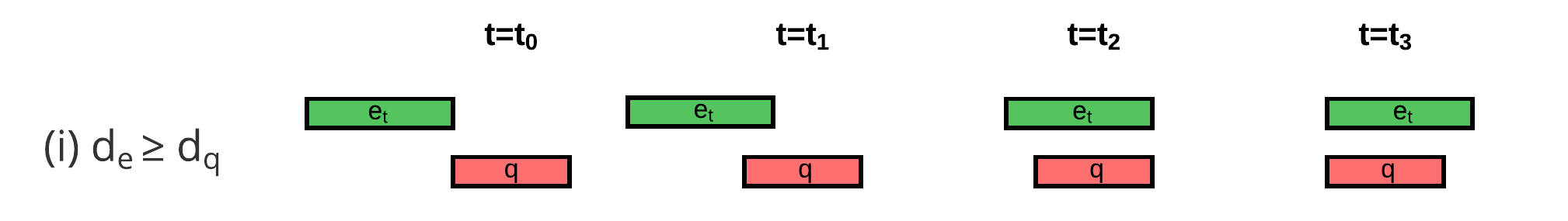}
    \caption{An illustration of how the sound event $e_t$ and the query segment $q$ overlap at the four distinct states $t = t_0, \dots, t_3$ for case (i) where $d_e \geq d_q$.}
    \label{fig:case_i}
\end{figure}

\begin{itemize}
    \item $t^{(i)}_0$: At $t^{(i)}_0 = 0$, the end of the event $e_t$ aligns perfectly with the beginning of the query segment $q$. This means there is no overlap between the event and the query segment ($|e_{t^{(i)}_0} \cap q| = 0$). Therefore, assuming the annotator absence criterion applies, the query segment accuracy is $F(e_{t^{(i)}_0}, q, \gamma) = \frac{d_q - |e_{t^{(i)}_0} \cap q|}{d_q} = \frac{d_q - 0}{d_q} = 1$.
    \item $t^{(i)}_1$: The time $t^{(i)}_1 = \gamma d_e$ represents the point where the annotator presence criterion is first met. Before this point ($t < t^{(i)}_1$), the overlap $|e_t \cap q|$ is less than $\gamma d_e$, and the query segment accuracy is given by $F(e_t, q, \gamma) = \frac{d_q - |e_t \cap q|}{d_q}$. As $t$ approaches $t^{(i)}_1$ from the left, $|e_t \cap q|$ approaches $\gamma d_e$, hence $\lim_{t \to t_1^-} F(e_{t}, q, \gamma) = \frac{d_q - \gamma d_e}{d_q}$. At $t = t^{(i)}_1$, the presence criterion is met, and the accuracy function switches to $F(e_t, q, \gamma) = \frac{|e_t \cap q|}{d_q}$. As $t$ approaches $t^{(i)}_1$ from the right, $|e_t \cap q|$ is slightly greater than $\gamma d_e$, and $\lim_{t \to t_1^+} F(e_{t}, q, \gamma) = \frac{\gamma d_e}{d_q}$. This transition is visually represented in Figure~\ref{fig:case_i} at time $t=t_1$.
    \item $t^{(i)}_2$: At $t^{(i)}_2 = d_q$, the entire query segment $q$ is fully contained within the event $e_t$. This means the overlap is maximal: $|e_{t^{(i)}_2} \cap q| = d_q$. Since the presence criterion is met, the query segment accuracy is $F(e_{t^{(i)}_2}, q, \gamma) = \frac{|e_{t^{(i)}_2} \cap q|}{d_q} = \frac{d_q}{d_q} = 1$. This behavior is visually represented in Figure~\ref{fig:case_i} at time $t=t_2$, where the green box representing the event fully covers the red box representing the query segment.
    \item $t^{(i)}_3$: At $t^{(i)}_3 = d_e$, the entire query segment $q$ still fully overlaps with the event $e_t$. Similar to $t_2$, the overlap is $|e_{t^{(i)}_3} \cap q| = d_q$, and therefore $F(e_{t^{(i)}_3}, q, \gamma) = \frac{|e_{t^{(i)}_3} \cap q|}{d_q} = \frac{d_q}{d_q} = 1$. This is depicted in Figure~\ref{fig:case_i} at time $t=t_3$.
\end{itemize}

\textbf{Case (ii): $d_e < d_q$}

\begin{figure}[H]
    \centering
    \includegraphics[width=0.8\textwidth]{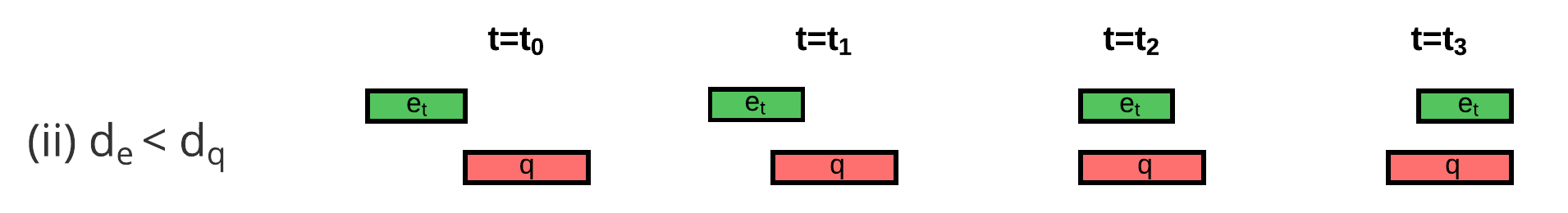}
    \caption{An illustration of how the sound event $e_t$ and the query segment $q$ overlap at the four distinct states $t = t_0, \dots, t_3$ for case (ii) where $d_e < d_q$.}
    \label{fig:case_ii}
\end{figure}

\begin{itemize}
    \item $t^{(ii)}_0$: At $t^{(ii)}_0 = 0$, the end of the event $e_t$ aligns perfectly with the beginning of the query segment $q$. There is no overlap ($|e_{t^{(ii)}_0} \cap q| = 0$). Assuming the annotator absence criterion applies, the query segment accuracy is $F(e_{t^{(ii)}_0}, q, \gamma) = \frac{d_q - |e_{t^{(ii)}_0} \cap q|}{d_q} = \frac{d_q - 0}{d_q} = 1$.
    \item $t^{(ii)}_1$: The time $t^{(ii)}_1 = \gamma d_e$ again marks the point where the annotator presence criterion is first met. Before this ($t < t^{(ii)}_1$), the overlap $|e_t \cap q| < \gamma d_e$, and $F(e_t, q, \gamma) = \frac{d_q - |e_t \cap q|}{d_q}$. Approaching $t^{(ii)}_1$ from the left, $|e_t \cap q| \to \gamma d_e$, thus $\lim_{t \to t_1^-} F(e_{t}, q, \gamma) = \frac{d_q - \gamma d_e}{d_q}$. At $t = t^{(ii)}_1$, the criterion is met, and the function becomes $F(e_t, q, \gamma) = \frac{|e_t \cap q|}{d_q}$. Approaching from the right, $|e_t \cap q|$ is slightly greater than $\gamma d_e$, so $\lim_{t \to t_1^+} F(e_{t}, q, \gamma) = \frac{\gamma d_e}{d_q}$. This transition is shown in Figure~\ref{fig:case_ii} at $t=t_1$.
    \item $t^{(ii)}_2$: At $t^{(ii)}_2 = d_e$, the beginning of the event $e_t$ aligns with the beginning of the query segment $q$. At this point, the overlap is maximal, as the entire event is contained within the query segment: $|e_{t^{(ii)}_2} \cap q| = d_e$. Since the presence criterion is met, the query segment accuracy is $F(e_{t^{(ii)}_2}, q, \gamma) = \frac{|e_{t^{(ii)}_2} \cap q|}{d_q} = \frac{d_e}{d_q}$. This situation is illustrated in Figure~\ref{fig:case_ii} at $t=t_2$.
    \item $t^{(ii)}_3$: At $t^{(ii)}_3 = d_q$, the end of the event $e_t$ aligns with the end of the query segment $q$. Similar to $t^{(ii)}_2$, the entire event is contained within the query segment, so the overlap is $|e_{t^{(ii)}_3} \cap q| = d_e$. Consequently, the query segment accuracy is $F(e_{t^{(ii)}_3}, q, \gamma) = \frac{|e_{t^{(ii)}_3} \cap q|}{d_q} = \frac{d_e}{d_q}$. This corresponds to the state depicted in Figure~\ref{fig:case_ii} at $t=t_3$.
\end{itemize}

Understanding these key time points and the corresponding query segment accuracy values is crucial for calculating the area under the curve, which represents the expected query segment accuracy.

\subsection{Proof of Theorem~\ref{thm:fix_optimal_query_length}}
\label{app:thm2}

\begin{proof}
We start by finding a unique critical point $d^*_q$ which makes $f'(d^*_q) = 0$ when $d_q \ge \gamma d_e$. We then show that $d_q^*$ is a global maximum by analyzing the boundaries of $f(d_q)$ on its' domain when $d_q \ge \gamma d_e$. We show that $f(d_q^*) \ge f(\gamma d_e)$ and that $f(d_q^*) \ge \lim_{d_q \rightarrow \infty} f(d_q)$. Since $d_q^*$ is a unique critical point we conclude that it must be a global maximum of the function $f(d_q)$ when $d_q \ge \gamma d_e$. Lastly, we show that $f(d_q^*) \ge f(\gamma d_e) \ge f(d_q)$ when $d_q < \gamma d_e$ which proves that $d_q^*$ is a global maximum of the function $f(d_q)$ for $d_q > 0$.

\textbf{1. Finding the unique critical point $d_q^*$.}

To find the critical points, we need to compute the derivative of $f(d_q)$ with respect to $d_q$ and set it to zero. Let $N(d_q) = d_e (-2d_e \gamma^2 + 2d_q \gamma + d_q)$ and $D(d_q) = d_q (d_e + d_q)$. Then $f(d_q) = \frac{N(d_q)}{D(d_q)}$. Using the quotient rule, the derivative is given by:
\begin{equation*}
f'(d_q) = \frac{N'(d_q)D(d_q) - N(d_q)D'(d_q)}{[D(d_q)]^2}
\end{equation*}
First, we find the derivatives of the numerator and the denominator:
\begin{align*}
N'(d_q) &= \frac{d}{dd_q} [d_e (-2d_e \gamma^2 + 2d_q \gamma + d_q)] \\
&= d_e (0 + 2\gamma + 1) \\
&= d_e (2\gamma + 1)
\end{align*}
\begin{align*}
D(d_q) &= d_q (d_e + d_q) = d_e d_q + d_q^2 \\
D'(d_q) &= \frac{d}{dd_q} [d_e d_q + d_q^2] \\
&= d_e + 2d_q
\end{align*}
Now, we plug these into the quotient rule formula:
\begin{align*}
f'(d_q) &= \frac{[d_e (2\gamma + 1)][d_q (d_e + d_q)] - [d_e (-2d_e \gamma^2 + 2d_q \gamma + d_q)][d_e + 2d_q]}{[d_q (d_e + d_q)]^2}
\end{align*}
To find the critical points, we set $f'(d_q) = 0$, which means the numerator must be zero:
\begin{equation*}
[d_e (2\gamma + 1)][d_q (d_e + d_q)] - [d_e (-2d_e \gamma^2 + 2d_q \gamma + d_q)][d_e + 2d_q] = 0
\end{equation*}
Since $d_e > 0$, we can divide by $d_e$:
\begin{equation*}
(2\gamma + 1) d_q (d_e + d_q) - (-2d_e \gamma^2 + 2d_q \gamma + d_q) (d_e + 2d_q) = 0
\end{equation*}
Expanding the terms:
\begin{align*}
(2\gamma + 1) (d_e d_q + d_q^2) &- (-2d_e^2 \gamma^2 - 4d_e d_q \gamma^2 + 2d_e d_q \gamma + 4d_q^2 \gamma + d_e d_q + 2d_q^2) = 0 \\
2\gamma d_e d_q + 2\gamma d_q^2 + d_e d_q + d_q^2 &- (-2d_e^2 \gamma^2 - 4d_e d_q \gamma^2 + 2d_e d_q \gamma + 4d_q^2 \gamma + d_e d_q + 2d_q^2) = 0
\end{align*}
Collecting and rearranging the terms to form a quadratic equation in $d_q$:
\begin{align*}
(2\gamma + 1 - 4\gamma - 2) d_q^2 + (2\gamma + 1 + 4\gamma^2 - 2\gamma - 1) d_e d_q + 2d_e^2 \gamma^2 &= 0 \\
(-2\gamma - 1) d_q^2 + (4\gamma^2) d_e d_q + 2d_e^2 \gamma^2 &= 0 \\
(2\gamma + 1) d_q^2 - 4\gamma^2 d_e d_q - 2d_e^2 \gamma^2 &= 0
\end{align*}
Using the quadratic formula $d_q = \frac{-b \pm \sqrt{b^2 - 4ac}}{2a}$, where $a = 2\gamma + 1$, $b = -4 d_e \gamma^2$, $c = -2 d_e^2 \gamma^2$:
\begin{align*}
d_q &= \frac{4 d_e \gamma^2 \pm \sqrt{(-4 d_e \gamma^2)^2 - 4 (2\gamma + 1) (-2 d_e^2 \gamma^2)}}{2 (2\gamma + 1)} \\
&= \frac{4 d_e \gamma^2 \pm \sqrt{16 d_e^2 \gamma^4 + 8 (2\gamma + 1) d_e^2 \gamma^2}}{4\gamma + 2} \\
&= \frac{4 d_e \gamma^2 \pm \sqrt{16 d_e^2 \gamma^4 + 16 d_e^2 \gamma^3 + 8 d_e^2 \gamma^2}}{4\gamma + 2} \\
&= \frac{4 d_e \gamma^2 \pm \sqrt{8 d_e^2 \gamma^2 (2\gamma^2 + 2\gamma + 1)}}{4\gamma + 2} \\
&= \frac{4 d_e \gamma^2 \pm 2 d_e |\gamma| \sqrt{4\gamma^2 + 4\gamma + 2}}{2(2\gamma + 1)}
\end{align*}
Since $\gamma > 0$, we have $|\gamma| = \gamma$:
\begin{align*}
d_q &= \frac{4 d_e \gamma^2 \pm 2 d_e \gamma \sqrt{4\gamma^2 + 4\gamma + 2}}{2(2\gamma + 1)} \\
&= \frac{2 d_e \gamma^2 \pm d_e \gamma \sqrt{4\gamma^2 + 4\gamma + 2}}{(2\gamma + 1)} \\
&= d_e \gamma \frac{2\gamma \pm \sqrt{4\gamma^2 + 4\gamma + 2}}{2\gamma + 1}
\end{align*}
We note that $\sqrt{4\gamma^2 + 4\gamma + 2} = 2\sqrt{\gamma^2 + \gamma + 0.5} > 2\sqrt{\gamma^2} = 2\gamma$, which means that we need to choose the positive sign for $d_q>0$ to be true.
The value of $d_q$ that makes the derivative zero is therefore uniquely defined by:
\begin{equation*}
d_q = d_e\,\gamma \frac{2\,\gamma + \sqrt{4\,\gamma^2 +4\,\gamma +2}}{2\,\gamma +1} \ge d_e \gamma,
\end{equation*}
where the last inequality holds because $\sqrt{4\gamma^2 + 4\gamma + 2} = 2\sqrt{\gamma^2 + \gamma + 0.5} \ge 1$.

\textbf{2. Analyze the function at the boundaries of its' domain.}

To understand why this critical point corresponds to a maximum, we analyze the function $f(d_q)$ as $d_q$ at the boundaries of its domain.

\textbf{2a. $f(d_q)$ when $d_q = \gamma d_e$ ($d_q \ge \gamma d_e$).}
\begin{align*}
    f(\gamma d_e) &= \frac{d_{e} \left( 2 (\gamma d_e) \gamma - 2 d_{e} \gamma^{2} + (\gamma d_e)\right)}{(\gamma d_e) \left(d_{e} + \gamma d_e\right)} \\
    &= \frac{d_{e} \left( 2 d_e \gamma^2 - 2 d_{e} \gamma^{2} + \gamma d_e\right)}{(\gamma d_e) \left(d_{e} + \gamma d_e\right)} \\
    &= \frac{d_{e} \left(\gamma d_e\right)}{(\gamma d_e) \left(d_{e} + \gamma d_e\right)} \\
    &= \frac{d_{e}^2 \gamma}{(\gamma d_e) d_{e}(1+\gamma)} \\
    &= \frac{1}{1+\gamma}.
\end{align*}

\textbf{2b. $f(d_q)$ as $d_q \rightarrow \infty$ ($d_q \ge \gamma d_e$).}

We want to evaluate the limit of $f(d_q)$ as $d_q$ approaches infinity:
\begin{align*}
\lim_{d_q \rightarrow \infty} f(d_q) &= \lim_{d_q \rightarrow \infty} \frac{d_e (-2d_e \gamma^2 + (2\gamma + 1)d_q)}{d_e d_q + d_q^2}
\end{align*}
Divide the numerator and the denominator by the highest power of $d_q$ in the denominator, which is $d_q^2$:
\begin{align*}
\lim_{d_q \rightarrow \infty} f(d_q) &= \lim_{d_q \rightarrow \infty} \frac{d_e \left(-\frac{2d_e \gamma^2}{d_q^2} + \frac{2\gamma + 1}{d_q}\right)}{\frac{d_e}{d_q} + 1}
\end{align*}
As $d_q \rightarrow \infty$, the terms $\frac{2d_e \gamma^2}{d_q^2}$, $\frac{2\gamma + 1}{d_q}$, and $\frac{d_e}{d_q}$ all approach 0. Thus,
\begin{equation*}
\lim_{d_q \rightarrow \infty} f(d_q) = \frac{d_e (0 + 0)}{0 + 1} = 0
\end{equation*}
This means that as $d_q$ becomes very large, the function $f(d_q)$ approaches 0.

\textbf{2c. Showing that $f(d_q^*) \ge f(\gamma d_e)$.}

We want to show that $f(d^*_q) \ge f(\gamma d_e)$. Or equivalently, that $f(d^*_q) - f(\gamma d_e) \ge 0$. From Theorem~\ref{thm:max_iou} we know that $f(d^*_q) = 2\gamma\left(2\gamma + 1 - \sqrt{4\gamma^2 + 4\gamma + 2}\right) + 1$, and from 2a we know that $f(\gamma d_e) = \frac{1}{1+\gamma}$. After substitution and some algebraic manipulation, we get
$$\gamma\left(\frac{4\gamma^2+6\gamma+3}{1+\gamma} - 2\sqrt{4\gamma^2 + 4\gamma + 2}\right) \ge 0.$$
Since $\gamma > 0$, it suffices to show that 
$$\frac{4\gamma^2+6\gamma+3}{1+\gamma} \geq 2\sqrt{4\gamma^2 + 4\gamma + 2}.$$
Squaring both sides of the above inequality and simplifying, we obtain the equivalent inequality
$$\left(\frac{4\gamma^2+6\gamma+3}{1+\gamma}\right)^2 \geq 4(4\gamma^2 + 4\gamma + 2).$$
After further algebraic manipulations (which we leave to the reader), we arrive at the inequality
$$(2\gamma + 1)^2 \geq 0.$$
Since $(2\gamma + 1)^2 \geq 0$ holds for all $\gamma$, and the previous steps are all equivalences, we conclude that 
$$ f(d_q^*) - f(\gamma d_e) \ge 0$$ for $0<\gamma\le 1$, and therefore,
$$f(d^*_q) \ge f(\gamma d_e).$$

\textbf{2d. Showing that $f(d_q^*) \ge \lim_{d_q\rightarrow\infty} f(d_q)$.}

We combine the results from 2a-2c to get
\begin{align*}
    f(d_q^*) &\ge f(\gamma d_e) \\
    &= \frac{1}{1+\gamma} \\
    &\ge 0  \\
    &= \lim_{d_q \rightarrow \infty} f(d_q).
\end{align*}

\textbf{2e. $f(d_q)$ as $d_q \rightarrow (\gamma d_e)^-$ ($d_q < \gamma d_e$).}

Since we are approaching $\gamma d_e$ from the left, we have that $f(d_q) = d_q/(d_e + d_q)$. This function is continuous for $d_q < \gamma d_e$, so the limit is given by the direct substitution:
\begin{align*}
\lim_{d_q \rightarrow (\gamma d_e)^-} \frac{d_q}{d_e + d_q} &= \frac{\gamma d_e}{d_e + \gamma d_e} \\
&= \frac{\gamma d_e}{d_e(1 + \gamma)} \\
&= \frac{\gamma}{1 + \gamma}
\end{align*}

\textbf{2f. Showing that $f(\gamma d_e) \ge f(d_q)$ when $d_q < \gamma d_e$}.
We start by noting that $f(\gamma d_e) = \frac{1}{1+\gamma} \ge \frac{\gamma}{1+\gamma} = \lim_{d_q\rightarrow(\gamma d_e)^-}$. Now it is sufficient to show that $f(d_q) = d_q/(d_q + d_e)$ is strictly decreasing for decreasing $d_q$, which we do by computing the derivative of $f(d_q)$ with respect to $d_q$ using the quotient rule:
\begin{align*}
    f'(d_q) &= \frac{(d_q+\gamma)(1) - d_q(1)}{(d_q+\gamma)^2} \\
    &= \frac{d_q+\gamma-d_q}{(d_q+\gamma)^2} \\
    &= \frac{\gamma}{(d_q+\gamma)^2}.
\end{align*}

Since $\gamma > 0$ and $(d_q+\gamma)^2 > 0$ for all $d_q > 0$, we have $f'(d_q) > 0$ for all $d_q > 0$. This implies that the function $f(d_q)$ is strictly increasing on the interval $(0, \infty)$. Therefore, if $0 < c \le b$, it must be the case that $f(c) \le f(b)$. Moreover, since $c < b$, $f(c) < f(b)$. Thus, for any $b>0$, $f(b) > f(c)$ for all $0< c \le b$. Now let $0 < d_q = c \leq \gamma d_e = b$.

\textbf{3. Combining everything (2a-2f)}

We have derived a unique critical point $d_q^* \ge \gamma d_e$ by setting the first derivative of $f(d_q)$ to zero. We have then shown that $f(d_q^*)$ is greater than or equal to $f(d_q)$ at the limits of its' domain when $d_q \ge \gamma d_e$. Finally, we show that $f(d_q^*) \ge f(\gamma d_e) \ge f(d_q)$ when $d_q < \gamma d_e$. Therefore, the value of $d_q$ that is the global maximum of $f(d_q)$ when $d_q>0$ is:
\begin{equation*}
\boxed{d_q^* = d_e\,\gamma \frac{2\,\gamma + \sqrt{4\,\gamma^2 +4\,\gamma +2}}{2\,\gamma +1}}
\end{equation*}
\end{proof}




\subsection{Proof of Theorem~\ref{thm:max_iou}}
\label{app:thm3}

\begin{proof}
From Theorem~\ref{thm:fix_optimal_query_length} we have that
\[
d_q^*
\;=\;
\frac{
  d_e\,\gamma\,\Bigl(2\,\gamma + \sqrt{4\,\gamma^2 +4\,\gamma +2}\Bigr)
}{
  2\,\gamma +1
}
\]
maximizes the function
\[
f(d_q)
\;=\;
\frac{
  d_e \bigl(-2\,d_e\,\gamma^2 \;+\; (2\,\gamma +1)\,d_q\bigr)
}{
  d_q\,\bigl(d_e + d_q\bigr)
}.
\]
We wish to show that the maximum label accuracy given overlap, 
\(\displaystyle f^*(\gamma) = f\bigl(d_q^*\bigr),\)
is 
\[
2\,\gamma\!\Bigl(
  2\,\gamma +1
  \;-\;
  \sqrt{\,4\,\gamma^2 +4\,\gamma +2}
\Bigr)
\;+\;
1.
\]

\medskip

\noindent
\textbf{1. Express $f(d_q)$ in terms of a dimensionless variable.}

Define
\[
\delta 
\;=\; 
\frac{d_q}{d_e}.
\]
Then
\[
d_q 
\;=\; 
\delta\,d_e,
\quad
d_e + d_q
\;=\;
d_e\,(1 + \delta),
\]
and
\[
f(d_q)
\;=\;
f(\delta\,d_e)
\;=\;
\frac{
  d_e \bigl(-2\,d_e\,\gamma^2 + (2\,\gamma +1)\,\delta\,d_e\bigr)
}{
  (\delta\,d_e)\,\bigl(d_e + \delta\,d_e\bigr)
}
=
\frac{
  -2\,\gamma^2 + (2\,\gamma +1)\,\delta
}{
  \delta \,\bigl(1 + \delta\bigr)
}.
\]
We can therefore write
\[
f(\delta)
\;=\;
\frac{
  -2\,\gamma^2 
  \;+\; 
  (2\,\gamma +1)\,\delta
}{
  \delta\,(1 + \delta)
}.
\]

\medskip

\noindent
\textbf{2. Identify the optimal dimensionless query length $\delta^*$.}

From Theorem~\ref{thm:fix_optimal_query_length}, we know that
\[
d_q^*
\;=\;
\frac{
  d_e\,\gamma\,
  \bigl(
    2\,\gamma \;+\; \sqrt{4\,\gamma^2 +4\,\gamma +2}
  \bigr)
}{
  2\,\gamma +1
}.
\]
Dividing both sides by \(d_e\) gives
\[
\delta^*
\;=\;
\frac{d_q^*}{d_e}
\;=\;
\gamma \,\frac{
  2\,\gamma \;+\; \sqrt{\,4\,\gamma^2 +4\,\gamma +2}
}{
  2\,\gamma +1
}.
\]
We need to show that
\[
f\bigl(\delta^*\bigr)
\;=\;
2\,\gamma\!\Bigl(2\,\gamma +1 - \sqrt{4\,\gamma^2 +4\,\gamma +2}\Bigr)
\;+\;
1.
\]

\medskip

\noindent
\textbf{3. Compute $f(\delta^*)$ explicitly.}

Let
\[
N(\delta)
\;=\;
-2\,\gamma^2 
\;+\; 
(2\,\gamma +1)\,\delta,
\quad
D(\delta)
\;=\;
\delta\,(1 + \delta).
\]
Then 
\(\;f(\delta) = \tfrac{N(\delta)}{D(\delta)}.\)

\begin{enumerate}
\item 
\textit{Numerator at \(\delta^*\).}

\[
N\bigl(\delta^*\bigr)
=
-2\,\gamma^2
\;+\;
(2\,\gamma +1)\,\delta^*
=
-2\,\gamma^2
\;+\;
(2\,\gamma +1)
\Bigl[
  \gamma
  \,\frac{
    2\,\gamma + \sqrt{\,4\,\gamma^2 +4\,\gamma +2}
  }{
    2\,\gamma +1
  }
\Bigr].
\]
Inside the brackets, \((2\,\gamma +1)\) cancels:
\[
N\bigl(\delta^*\bigr)
=
-2\,\gamma^2 
\;+\; 
\gamma\,\bigl(2\,\gamma + \sqrt{\,4\,\gamma^2 +4\,\gamma +2}\bigr)
=
-2\,\gamma^2 
\;+\;
2\,\gamma^2
\;+\;
\gamma\;\sqrt{\,4\,\gamma^2 +4\,\gamma +2}
=
\gamma\;\sqrt{\,4\,\gamma^2 +4\,\gamma +2}.
\]

\item 
\textit{Denominator at \(\delta^*\).}

\[
D(\delta)
\;=\;
\delta\,(1 + \delta).
\]
Hence,
\[
D\bigl(\delta^*\bigr)
=
\delta^*
\Bigl(1 + \delta^*\Bigr)
=
\Bigl[
  \gamma \,\frac{2\,\gamma + \sqrt{\,4\,\gamma^2 +4\,\gamma +2}}{2\,\gamma +1}
\Bigr]
\Bigl[
  1
  \;+\;
  \gamma \,\frac{2\,\gamma + \sqrt{\,4\,\gamma^2 +4\,\gamma +2}}{2\,\gamma +1}
\Bigr].
\]
The second bracket becomes a single fraction:
\[
1 
+ 
\gamma \,\frac{2\,\gamma + \sqrt{\,4\,\gamma^2 +4\,\gamma +2}}{2\,\gamma +1}
=
\frac{
  (2\,\gamma +1)
  \;+\;
  \gamma\;\bigl(2\,\gamma + \sqrt{\,4\,\gamma^2 +4\,\gamma +2}\bigr)
}{
  2\,\gamma +1
}.
\]
Combining, we get
\[
D\bigl(\delta^*\bigr)
=
\gamma \,\frac{2\,\gamma + \sqrt{\,4\,\gamma^2 +4\,\gamma +2}}{2\,\gamma +1}
\;\times\;
\frac{
  (2\,\gamma +1)
  + 
  2\,\gamma^2 
  + 
  \gamma\;\sqrt{\,4\,\gamma^2 +4\,\gamma +2}
}{
  2\,\gamma +1
}.
\]
So
\[
D\bigl(\delta^*\bigr)
=
\gamma\,
\frac{
  (2\,\gamma + \sqrt{\,4\,\gamma^2 +4\,\gamma +2})
  \,\bigl(
    2\,\gamma +1 + 2\,\gamma^2 
    + 
    \gamma\,\sqrt{\,4\,\gamma^2 +4\,\gamma +2}
  \bigr)
}{
  (2\,\gamma +1)^2
}.
\]

\item
\textit{Form the ratio.}  
Thus,
\[
f\bigl(\delta^*\bigr)
=
\frac{
  N(\delta^*)
}{
  D(\delta^*)
}
=
\frac{
  \gamma\;\sqrt{\,4\,\gamma^2 +4\,\gamma +2}
}{
  \gamma 
  \,\frac{
    (2\,\gamma + \sqrt{\,4\,\gamma^2 +4\,\gamma +2})
    \,\bigl(
      2\,\gamma +1 + 2\,\gamma^2 
      + 
      \gamma\,\sqrt{\,4\,\gamma^2 +4\,\gamma +2}
    \bigr)
  }{
    (2\,\gamma +1)^2
  }
}.
\]
Cancel the common factor \(\gamma\), invert the denominator and multiply:
\[
f\bigl(\delta^*\bigr)
=
\frac{
  \sqrt{\,4\,\gamma^2 +4\,\gamma +2}\,(2\,\gamma +1)^2
}{
  (2\,\gamma + \sqrt{\,4\,\gamma^2 +4\,\gamma +2})
  \,\bigl(
    2\,\gamma +1 + 2\,\gamma^2 
    + 
    \gamma\,\sqrt{\,4\,\gamma^2 +4\,\gamma +2}
  \bigr)
}.
\]
You can verify by direct expansion (or by a symbolic algebra tool which we provide in the supplementary material) that
\[
\frac{
  \sqrt{\,4\,\gamma^2 +4\,\gamma +2}\,(2\,\gamma +1)^2
}{
  (2\,\gamma + \sqrt{\,4\,\gamma^2 +4\,\gamma +2})
  \,\bigl(
    2\,\gamma +1 + 2\,\gamma^2 
    + 
    \gamma\,\sqrt{\,4\,\gamma^2 +4\,\gamma +2}
  \bigr)
}
=
2\,\gamma\,\Bigl(2\,\gamma +1 - \sqrt{\,4\,\gamma^2 +4\,\gamma +2}\Bigr) 
\;+\;
1.
\]
Thus
\[
f\bigl(\delta^*\bigr)
\;=\;
2\,\gamma\,\Bigl(2\,\gamma +1 - \sqrt{4\,\gamma^2 +4\,\gamma +2}\Bigr) + 1,
\]
which proves that
\[
f^*(\gamma)
=
f\bigl(d_q^*\bigr)
=
2\,\gamma\,\Bigl(2\,\gamma +1 - \sqrt{4\,\gamma^2 +4\,\gamma +2}\Bigr)
\;+\;
1.
\]
\end{enumerate}

Hence, Eq.~\ref{eq:max_iou} holds, completing the proof.
\end{proof}

\subsection{Empirical Analysis of Uniform Relative Offset Distribution Between Events and Overlapping Segments}
\label{app:uniform_assumption}

We empirically verify that the relative offset between events and overlapping query segments can be modeled well by a uniform distribution. An event is denoted by $e=(a_e, b_e, c_e)$, where $a_e$ is the start time, $b_e$ is the end time, and $c_e$ is the class, where $c_e \in \{\text{"dogs"}, \text{"baby"}\}$. Similarly, a query segment is denoted by $q = (a_q, b_q)$. We define the event distribution using the labeled start times of different sound event classes from the NIGENS~\cite{Trowitzsch2019} dataset, but we fix the event length $d_e$ to the median event length of the respective sound class to respect the deterministic event length assumption. We then verify that the relative offset between the events and the overlapping segments, defined as $b_q - a_e$, is uniform over the range $[0, d_e + d_q]$. To simulate realistic scenarios that maintain a reasonable label accuracy, we let $d_q \in \{\frac{d_e}{10}, d_e, 10d_e\}$. That is, the query segment is not larger than $10$ times the median event length. Note that if $d_q \gg d_e$ then the uniform relative offset assumption is not expected to hold, but that also means that the label accuracy will be very low which is not wanted in practice. We present the results in Figure~\ref{fig:uniform_assumption}. For both sound event classes the distribution looks flat for all three choices of $d_q$, meaning that it can be modeled well by a uniform distribution, verifying that this is a plausible assumption in practical scenarios.

\begin{figure}[h]
    \centering
    \includegraphics[width=0.48\linewidth]{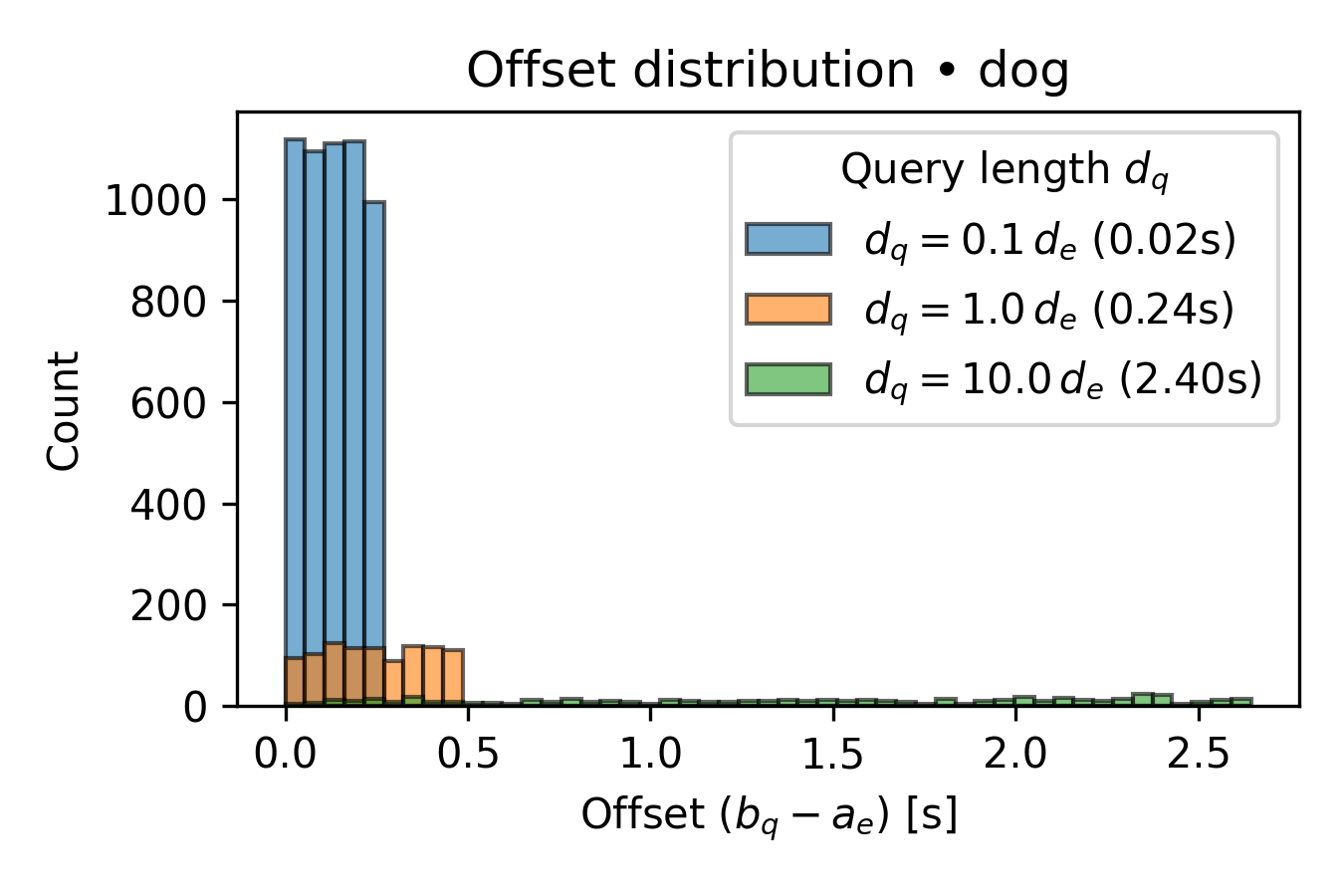}
    \includegraphics[width=0.48\linewidth]{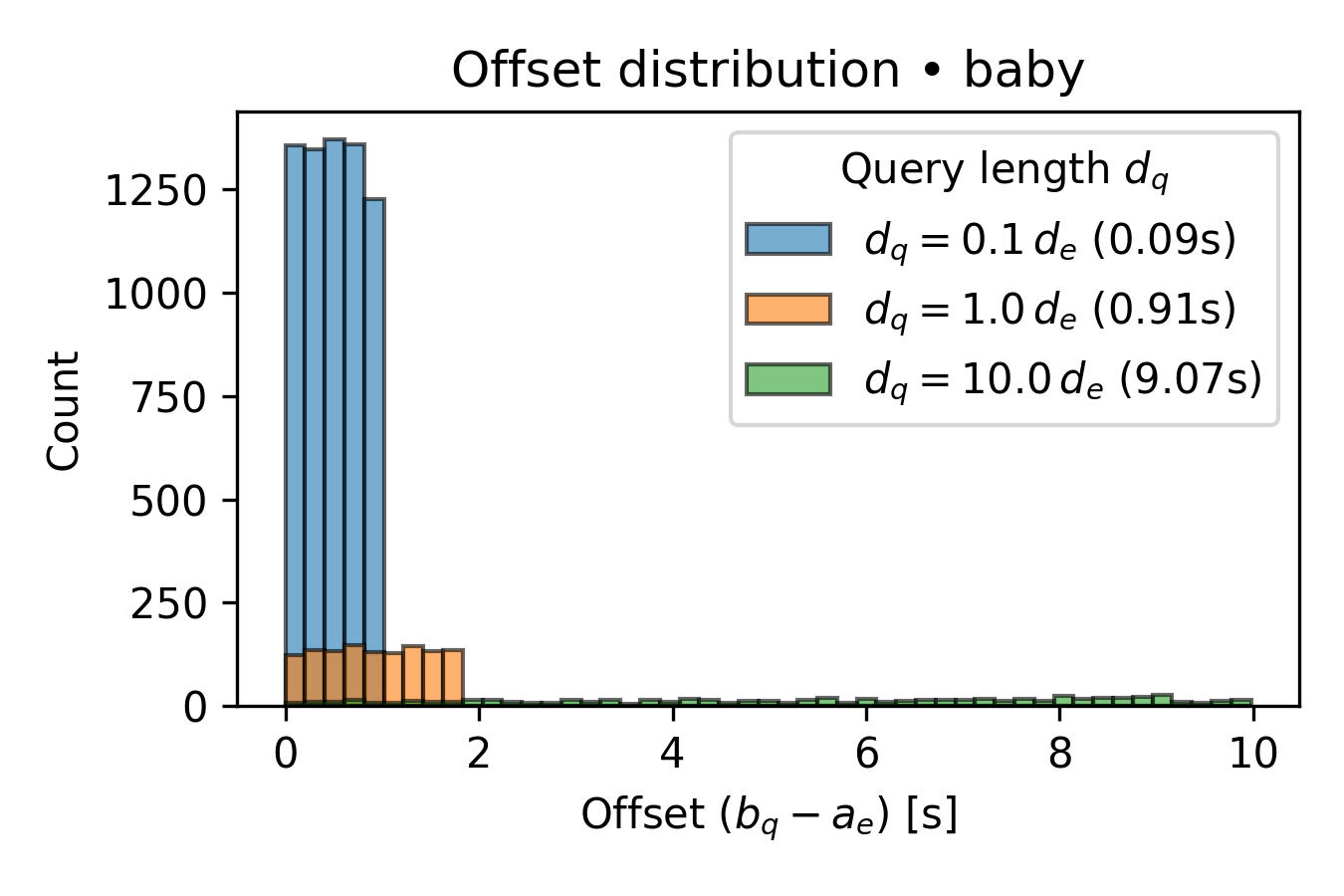}
    \caption{The distribution of relative offsets between events and overlapping query segments for dog (left) and baby (right) sound event classes from the NIGENS~\cite{Trowitzsch2019} dataset. We use the annotated relative start times of the events, and fix the event length $d_e$ to the median event length for the respective event class. The distribution looks flat and can be modeled well with a uniform distribution.}
    \label{fig:uniform_assumption}
\end{figure}

\subsection{Empirical Analysis of Theory}
\label{app:audioset_analysis}

In an attempt to empirically validate the theory we have compared the weakly labeled version of AudioSet~\cite{Hershey2017} with the strongly labeled version~\cite{Hershey2021}. The weakly labeled version of AudioSet uses $10$ second segments, corresponding to $d_q=10$. A subset of AudioSet has been strongly labeled by indicating start and end times of the weakly labeled events. For each strongly labeled event, we compare the strong label to the weak label to compute the accuracy. That is, if the weakly labeled segment of length $d_q$ indicates the presence of an event, and the corresponding strong label for that event has length $d_e$, then the accuracy is $\frac{d_e}{d_q}$ for that event. We compute this accuracy for all sound events corresponding to the "Animal" class, and take the average. The theoretical accuracy for a given annotator criterion $\gamma$ is derived by taking the numerical average over the event lengths for the "Animal" class to estimate the numerical integration over the event length distribution presented Eq.~(13).

The results are shown in Figure~\ref{fig:audioset_analysis}. Note that the query segment is not chosen to maximize label accuracy as in previous analysis. Since $d_q = 10$ it is longer than most 'Animal' events in AudioSet and restricts the maximum event length that can be annotated to $d_e \leq d_q$ as can be seen in the right panel of Figure~\ref{fig:audioset_analysis}. In the left panel of Figure~\ref{fig:audioset_analysis} we see that the label accuracy of the weakly labeled version of AudioSet falls within the range predicted by the theory for $\gamma \in (0, 1]$. Assuming that the theory is correct would indicate that $\gamma = 0.26$ is the presence criterion that best models the weak labeling process of AudioSet. While these results do not reject the proposed theory we would need to empirical estimate $\gamma$ based on real annotators to properly validate it. This is considered as out of scope for this paper, but would be interesting future work.

\begin{figure}[h]
    \centering
    \includegraphics[width=0.48\linewidth]{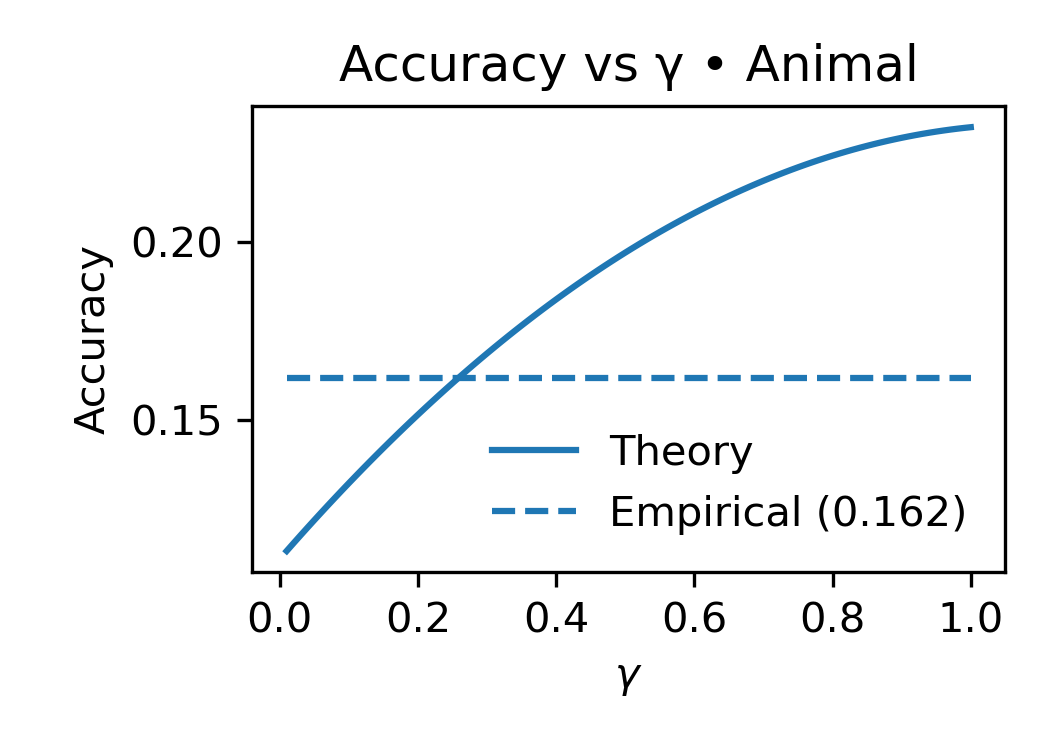}
    \includegraphics[width=0.48\linewidth]{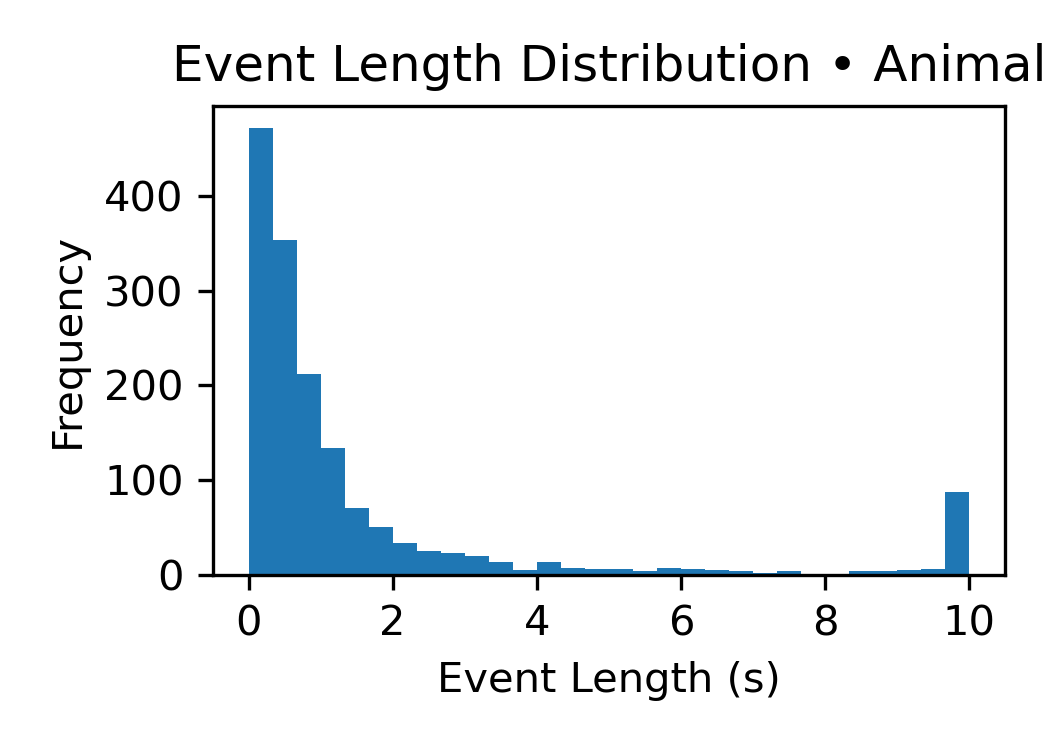}
    \caption{The theoretical prediction of the label accuracy for different $\gamma$ (left) when averaging over the event length distribution (right) for the animal events in the strongly labeled subset of AudioSet. The empirical accuracy (dashed blue line) indicates the label accuracy that was derived by comparing the weakly labeled version of AudioSet with the strongly labeled version. The empirical label accuracy falls within the range predicted by the theory.}
    \label{fig:audioset_analysis}
\end{figure}

\end{document}